\definecolor{Blue}{rgb}{0.0,0.0,1.0}
\theoremstyle{plain}
\newtheorem{theorem}{Theorem}%[section]
\newtheorem*{proposition*}{Proposition}
\newtheorem{definition}[theorem]{Definition}
\theoremstyle{remark}
\definecolor{midgreen}{rgb}{0.1,0.5,0.1}
\definecolor{darkgray}{gray}{0.25}
\definecolor{lightblue}{rgb}{0.25,0.25,1}
\theoremstyle{definition}
\newcommand{\norm}[1]{\ensuremath{\left\| #1 \right\|}}
\newcommand{\inner}[1]{\left \langle {#1} \right \rangle}
\newcommand{\bigo}{\mathcal{O}}
\newcommand{\abs}[1]{\left |#1\right|}
\newcommand{\poly}{\mathrm{poly}}
\def\argmax{\mathop{\rm argmax}}
\def\tr{\mathrm{tr}}
\def\0{{\bm 0}}
\def\b{{\bm b}}
\def\v{{\bm v}}
\def\x{{\bm x}}
\def\y{{\bm y}}
\def\z{{\bm z}}
\def\A{{\bm A}}
\def\B{{\bm B}}
\def\C{{\bm C}}
\def\D{{\bm D}}
\def\G{{\bm G}}
\def\I{{\bm I}}
\def\K{{\bm K}}
\def\L{{\bm L}}
\def\M{{\bm M}}
\def\N{{\bm N}}
\def\Q{{\bm Q}}
\def\R{{\bm R}}
\def\S{{\bm S}}
\def\U{{\bm U}}
\def\V{{\bm V}}
\def\W{{\bm W}}
\def\X{{\bm X}}
\def\Lambda{\boldsymbol{\lambda}}
\def\Rbb{\mathbb{R}}
\def\Lhat{{\widehat{\L}}}
\def\tmix{t_{\mathrm{iter}}}
\icmltitlerunning{Scalable MCMC Sampling for NDPPs}
\begin{document}

\twocolumn[
\icmltitle{Scalable MCMC Sampling for Nonsymmetric Determinantal Point Processes}

% It is OKAY to include author information, even for blind
% submissions: the style file will automatically remove it for you
% unless you've provided the [accepted] option to the icml2022
% package.

% List of affiliations: The first argument should be a (short)
% identifier you will use later to specify author affiliations
% Academic affiliations should list Department, University, City, Region, Country
% Industry affiliations should list Company, City, Region, Country

% You can specify symbols, otherwise they are numbered in order.
% Ideally, you should not use this facility. Affiliations will be numbered
% in order of appearance and this is the preferred way.
% \icmlsetsymbol{equal}{*}

\begin{icmlauthorlist}
\icmlauthor{Insu Han}{yale}
\icmlauthor{Mike Gartrell}{criteo}
\icmlauthor{Elvis Dohmatob}{fair}
\icmlauthor{Amin Karbasi}{yale}
%\icmlauthor{}{sch}
%\icmlauthor{}{sch}
\end{icmlauthorlist}

\icmlaffiliation{yale}{Yale University}
\icmlaffiliation{criteo}{Criteo AI Lab, Paris, France}
\icmlaffiliation{fair}{Facebook AI Lab, Paris, France}

% \begin{icmlauthorlist}
% \icmlauthor{Firstname1 Lastname1}{equal,yyy}
% \icmlauthor{Firstname2 Lastname2}{equal,yyy,comp}
% \icmlauthor{Firstname3 Lastname3}{comp}
% \icmlauthor{Firstname4 Lastname4}{sch}
% \icmlauthor{Firstname5 Lastname5}{yyy}
% \icmlauthor{Firstname6 Lastname6}{sch,yyy,comp}
% \icmlauthor{Firstname7 Lastname7}{comp}
% %\icmlauthor{}{sch}
% \icmlauthor{Firstname8 Lastname8}{sch}
% \icmlauthor{Firstname8 Lastname8}{yyy,comp}
% %\icmlauthor{}{sch}
% %\icmlauthor{}{sch}
% \end{icmlauthorlist}

% \icmlaffiliation{yyy}{Department of XXX, University of YYY, Location, Country}
% \icmlaffiliation{comp}{Company Name, Location, Country}
% \icmlaffiliation{sch}{School of ZZZ, Institute of WWW, Location, Country}

\icmlcorrespondingauthor{Insu Han}{insu.han@yale.edu}
\icmlcorrespondingauthor{Mike Gartrell}{m.gartrell@criteo.com}

% You may provide any keywords that you
% find helpful for describing your paper; these are used to populate
% the "keywords" metadata in the PDF but will not be shown in the document
\icmlkeywords{Machine Learning, ICML}

\vskip 0.3in
]

% this must go after the closing bracket ] following \twocolumn[ ...

% This command actually creates the footnote in the first column
% listing the affiliations and the copyright notice.
% The command takes one argument, which is text to display at the start of the footnote.
% The \icmlEqualContribution command is standard text for equal contribution.
% Remove it (just {}) if you do not need this facility.

\printAffiliationsAndNotice{}  % leave blank if no need to mention equal contribution
%\printAffiliationsAndNotice{\icmlEqualContribution} % otherwise use the standard text.

\begin{abstract}
A determinantal point process (DPP) is an elegant model that assigns a probability to every subset of a collection of $n$ items.  While conventionally a DPP is parameterized by a symmetric kernel matrix, removing this symmetry constraint, resulting in nonsymmetric DPPs (NDPPs), leads to significant improvements in modeling power and predictive performance.  Recent work has studied an approximate Markov chain Monte Carlo (MCMC) sampling algorithm for NDPPs restricted to size-$k$ subsets (called $k$-NDPPs). However, the runtime of this approach is quadratic in $n$, making it infeasible for large-scale settings.  In this work, we develop a scalable MCMC sampling algorithm for $k$-NDPPs with low-rank kernels, thus enabling runtime that is sublinear in $n$.  Our method is based on a state-of-the-art NDPP rejection sampling algorithm, which we enhance with a novel approach for efficiently constructing the proposal distribution.  Furthermore, we extend our scalable $k$-NDPP sampling algorithm to NDPPs without size constraints.  Our resulting sampling method has polynomial time complexity in the rank of the kernel, while the existing approach has runtime that is exponential in the rank.  With both a theoretical analysis and experiments on real-world datasets, we verify that our scalable approximate sampling algorithms are orders of magnitude faster than existing sampling approaches for $k$-NDPPs and NDPPs.
% , providing a runtime that is orders of magnitude faster for large $n$.
\end{abstract}

\renewcommand{\arraystretch}{1.2}
\begin{table*}[t]
    \vspace{-0.1in}
\caption{Summary of recent NDPP sampling algorithms. The sampling time of \citet{anonymous2022scalable}'s work assumes an orthogonal constraint on the kernel. Here, $n$ is the size of ground set, $d$ is the rank of the kernel, $k$ refers to the size of sampled set ($k\leq d \ll n$) and $\alpha>0$ is a data-dependent factor. \citet{alimohammadi2021fractionally} showed that $\tmix=\poly(k)$ guarantees the convergence of MCMC sampling, where $\kappa > 0$ is a condition number of the NDPP kernel component (see \cref{thm-total-runtime} for details).}
	\vspace{0.05in}
	\centering
	\scalebox{0.85}{
	\setlength{\tabcolsep}{11pt}
	\begin{tabular}{llll}
		\toprule
		{Algorithm} & {Task} & {Preprocessing Time} & {Sampling Time} \\
		\midrule
		Cholesky-based Exact~\cite{poulson2019high} & NDPP  & $-$ & $\bigo(n \cdot d^2)$ \\
		Rejection-based Exact~\cite{anonymous2022scalable} & NDPP & $\bigo(n \cdot d^2)$ & $\bigo((\log n \cdot k^3 + k^4 + d) \cdot (1+\alpha)^d)$\\
		Na\"ive MCMC~\cite{alimohammadi2021fractionally}  & $k$-NDPP / NDPP & $-$ & $\bigo(n^2 \cdot k^3 \cdot \tmix)$\\
		Scalable MCMC~(\textbf{This work}) & $k$-NDPP / NDPP & $\bigo(n \cdot d^2)$ & $\bigo((\log n \cdot d^2 + d^3) \cdot (1+\kappa)^2 \cdot \tmix)$\\
		\bottomrule
	\end{tabular}
	}\label{tab:summary-recent-ndpp-algorithms}
% 	{\small \hfill $^*$ Under orthogonality constraint on the kernel.}
	% \vspace{-0.1in}
\end{table*}

\section{Introduction}

Determinantal Point Processes (DPPs) are probability distributions defined on the set of all subsets of a collection of $n$ items. They have been applied to a variety of fundamental machine learning problems, including robustness learning~\cite{pang2019improving}, reinforcement learning~\cite{yang2020multi}, and bandit optimization~\cite{kathuria2016batched}, among many others.
While conventionally a DPP is parameterized by a symmetric kernel matrix, \citet{gartrell2019nonsym} showed that any nonsymmetric and positive semidefinite matrix can define a valid DPP, which they refer to as a nonsymmetric DPP (NDPP). In addition, they established a number of useful properties of NDPPs.  For example, NDPPs are able to capture both positive and negative correlations among items, while symmetric DPPs can only represent negative correlations, leading to significant improvements in modeling power and predictive performance. 

% \subsection{Related works}
Recent works have proposed efficient algorithms for various NDPP tasks, including learning~\cite{gartrell2020scalable}, MAP inference~\cite{anari2021sampling}, and sampling~\cite{anonymous2022scalable}, where the NDPP kernel is given by a low-rank factorization. In this paper we focus on developing an efficient sampling algorithm for NDPPs restricted to size $k$ subsets, called $k$-NDPPs. Such size-constrained DPPs are often more practical in applications such as video summarization~\cite{sharghi2018improving}, mini-batch optimization~\cite{zhang2017determinantal}, document summarization~\cite{dupuy16} and coreset sampling~\cite{tremblay2019determinantal}.
The only existing approach for $k$-NDPP sampling is an approximate method based on Markov chain Monte Carlo (MCMC) sampling ~\cite{alimohammadi2021fractionally,anari2021sampling}. 
The algorithm is based on a random walk, where in every iteration a pair of items is exchanged with some probability. These prior works primarily focused on the number of iterations required for convergence, and proved that with time polynomial in $k$, the sampling algorithm converges to the $k$-NDPP target distribution. However, each transition step needs time quadratic in $n$, making this approach infeasible for large-scale settings.

\vspace{-0.05in}
\subsection{Contributions}
In this work, we develop a scalable MCMC sampling algorithm for $k$-NDPPs 
% , and NDPPs without size constraints, 
with low-rank kernels.
% To this end, we propose an efficient algorithm for accelerating the MCMC $k$-NDPP sampling algorithm by utilizing a low-rank kernel structure. 
In particular, we accelerate the transition step of the MCMC sampling algorithm so that it runs in sublinear (polynomial-logarithmic) time in $n$.  We first show that this step is equivalent to sampling a subset of size $2$ from a conditional NDPP.  To achieve fast $2$-NDPP sampling, 
we make use of a state-of-the-art NDPP rejection sampling algorithm~\cite{anonymous2022scalable}, which we enhance with a novel approach for efficiently constructing a symmetric DPP this is used for the proposal distribution. When the NDPP kernel is given by a rank-$d$ factorization ($d \ll n$), the proposal DPP kernel can be constructed in only $\bigo(d^3)$ time.  This type of proposal kernel is similar to a personalized version of the DPP kernel~\cite{gillenwater2019tree,han2020aistats}, which consists of a global features matrix $\X\in\Rbb^{n \times d}$ and a personalization matrix $\U \in \Rbb^{d \times d}$.  This proposal distribution changes in every transition of the MCMC sampling, however according to our construction it suffices to update only the matrix $\U$.  
This allows us to utilize a fast tree-based DPP sampling algorithm~\cite{gillenwater2019tree} suitable for personalized DPPs.  This tree-based algorithm requires us to build a binary tree based on the global features, as a one-time preprocessing step. After preprocessing, the sampling algorithm runs in time that is logarithmic in $n$.  This makes a single iteration of the rejection sampling much faster. We further prove that the number of rejections does not depend on the dimensions of the NDPP kernel, but on some spectral bounds of the kernel.  As a consequence, our MCMC sampling algorithm for $k$-NDPPs runs in logarithmic time in $n$, and polynomial time in both $d$ and $k$.
To the best of our knowledge, this is the first work on a sublinear time algorithm for $k$-NDPP sampling.
In our experiments, we observe that our proposed algorithm runs orders of magnitude faster than the existing sampling approach, which for kernels learned from some datasets does not terminate within 10 days.

Furthermore, we extend our sampling algorithm to size-unconstrained NDPPs. 
The resulting algorithm has polynomial time complexity in the rank $d$ of the kernel, while the existing sampling algorithm for NDPPs~\cite{anonymous2022scalable} has runtime that is exponential in $d$.
Through theoretical analysis and experiments on real-world datasets, we show that our approximate sampling algorithm is orders of magnitude faster than the fastest existing sampling approach for $k$-NDPPs, and up to an order of magnitude faster for NDPPs.
The source code for our NDPP sampling algorithms is publicly available at \url{https://github.com/insuhan/ndpp-mcmc-sampling}.

% \vspace{-0.1in}
\subsection{Related Work}
% \vspace{-0.05in}
Fast sampling algorithms for symmetric DPPs have been extensively studied, including a tree-based algorithm~\cite{gillenwater2019tree}, and an intermediate sampling method~\cite{derezinski2019fastdpp}.  These methods commonly require a one-time preprocessing step, with the subsequent sampling procedure running in time that is sublinear in the size of the ground set $n$. \citet{celis2017complexity} studied a polynomial time sampling algorithm under partition constraints. For unconstrained-size NDPP sampling, \citet{poulson2019high} developed the Cholesky-based sampling algorithm, which runs in time $\bigo(n^3)$ for general kernels. Recently, \citet{anonymous2022scalable} showed that with a rank-$d$ kernel decomposition, the runtime of the Cholesky-based algorithm can be reduced to $\bigo(nd^2)$.  Moreover, they propose a tree-based rejection sampling algorithm for NDPPs that combines previous work for fast sampling of symmetric DPPs with an efficient approach for constructing the proposal distribution.  However, although the sampling process has runtime that is sublinear in $n$, they show that the average number of rejections is exponential in $d$, which can be problematic in general.  For $k$-NDPPs, to the best of our knowledge, there is no prior work on an efficient algorithm for exact sampling. The only existing approach is an approximate MCMC sampling algorithm~\cite{alimohammadi2021fractionally}, which has runtime that is quadratic in $n$. We summarize these $k$-NDPP and NDPP sampling algorithms in \cref{tab:summary-recent-ndpp-algorithms}.

% \vspace{-0.1in}
\section{Background}

\paragraph{Notation.} The set of first $n$ positive integers is denoted by $\{1,\dots, n\}\coloneqq[n]$.  For a finite set $S$, we denote by $\binom{S}{k}$ the collection of all $k$-element subsets of a set $S$.   We use $\I_d$ for the $d$-by-$d$ identity matrix and drop the subscript when it is clear from the context.  For a matrix $\X \in \Rbb^{m \times n}$ and indices $A\subseteq[m],B\subseteq [n]$, we use $\X_{A,B} \in \Rbb^{|A| \times |B|}$ to denote a submatrix of $\X$ whose rows and columns are indexed by $A$ and $B$, respectively. We write $\X_{:,B} \coloneqq \X_{[m],B}$ to denote all rows of $\X$, and similarly $\X_{A,:}\coloneqq\X_{A,[n]}$ for all columns of $\X$. We denote the largest and smallest singular values of $\X$ by $\sigma_{\max}(\X)$ and $\sigma_{\min}(\X)$, respectively.
We use $\succeq$ to denote the Loewner order, i.e., $\A \succeq \B$ implies $\A - \B$ is positive semidefinite (PSD), and $\mathrm{Diag}$ to denote the direct sum, i.e., $\mathrm{Diag}(\A,\B)=\begin{bsmallmatrix}\A&\bm{0}\\\bm{0}&\B\end{bsmallmatrix}$.

% \vspace{-0.05in}
\subsection{Nonsymmetric DPPs}
Given a matrix $\L \in \Rbb^{n \times n}$, a DPP assigns a probability \begin{align}
    \mathcal{P}_{\L}(S) \propto \det(\L_S)
\end{align} to every subset $S$ of $[n]$. Any symmetric and PSD kernel $\L$ guarantees that $\det(\L_S)$ is nonnegative, and therefore admits a DPP.  \citet{gartrell2019nonsym} extended the space of valid DPP kernels and proved that nonsymmetric and PSD $\L$ kernels (i.e., $\L + \L^\top \succeq 0$) can be also used to define a DPP.   An important property of NDPPs is that they can capture both positive and negative correlations, while symmetric DPPs only capture the negative correlations, resulting in significant improvements in modeling power and predictive performance for NDPPs. 

In particular, \citet{gartrell2019nonsym} proposed a kernel construction for NDPPs that combines a symmetric PSD matrix and a skew-symmetric matrix:
\begin{align} \label{eq:ndpp-low-rank}
    \L = \V \V^\top + \B (\D - \D^\top) \B^\top,
\end{align}
where $\V \in \Rbb^{n \times d_1}, \B \in \Rbb^{n \times d_2}$, and $\D \in \Rbb^{d_2 \times d_2}$. For simplicity, we will write $\X := [\V,~\B] \in \Rbb^{n \times d}$, $\W := \mathrm{Diag}(\I, \D-\D^\top) \in \Rbb^{d \times d}$ for $d=d_1+d_2$, and $\L = \X \W \X^\top$. 

We refer to a $k$-NDPP as a NDPP whose support is restricted to size-$k$ subsets of $[n]$.\footnote{Throughout this paper, we assume that $2 \le k \le d \ll n$.} As studied in~\citet[Proposition 5.1]{kulesza11}, the normalization constant of a $k$-NDPP can be computed using the eigenvalues of $\L$.\footnote{This was originally studied for symmetric DPPs, but can be naturally extended to a  nonsymmetric PSD matrix $\L$.} Formally, when $\{ \lambda_i \}_{i=1}^d$ are the nonzero eigenvalues of the rank-$d$ matrix $\L$, it holds that
% \citet[Proposition 5.1]{kulesza11} proved that
\begin{align} \label{eq-kdpp-normalization-constant}
    \sum_{S \in \binom{[n]}{k}} \det(\L_S) = \sum_{S \in \binom{[d]}{k}} \prod_{i \in S} \lambda_i \coloneqq e_k(\{\lambda_i\}_{i=1}^d),
\end{align}
where $e_k$ is known as the $k$-th \emph{elementary symmetric polynomial}. 
Note that $\{ \lambda_i \}_{i=1}^d$ are also eigenvalues of $\W \X^\top \X \in \Rbb^{d \times d}$, and therefore one can obtain them from matrix-matrix multiplications and the eigendecomposition, resulting in $\bigo(nd^2)$ runtime. 
In addition, \cref{eq-kdpp-normalization-constant} can be computed in time $\bigo(dk)$ using the following recursive relation:
\begin{align} \label{eq-elementary-symmetric-polynomials-recursion}
    e_k(\{\lambda_i\}_{i=1}^d)=e_k(\{\lambda_i\}_{i=1}^{d-1})+\lambda_d \cdot e_{k-1}(\{\lambda_i\}_{i=1}^{d-1}),
\end{align}
where $e_0(\{\lambda_i\}_{i=1}^d)=1$.  Since every determinant of a principal submatrix of $\L$ is nonnegative, the $e_k$'s for NDPPs are also nonnegative.

\subsection{MCMC Sampling for $k$-NDPPs} \label{sec-kndpp-mcmc-sampling-background}
An MCMC sampling algorithm for a $k$-DPP begins with a subset $S$ selected from $\binom{[n]}{k}$ uniformly at random, and then iteratively updates $S$ with some probability.  For symmetric $k$-DPPs, single-item-exchange Markov chains (i.e., $S$ is replaced with $S \cup \{i\} \setminus \{j\}$ for $i \notin S, j\in S$ in every iteration) can guarantee fast convergence to the approximate target distribution in total variation distance~\cite{li2016fast,anari2016colt,rezaei2019polynomial}.  However, the single-item-exchange chain does not mix well for $k$-NDPPs because they are not negatively dependent, which is a key requirement for fast mixing~\cite{anari2016colt}.

Recent work has shown that when a pair of items $S$ is exchanged, the chain can quickly converge to the target $k$-NDPP distribution~\citep{anari2021sampling,alimohammadi2021fractionally}.  We provide pseudo-code for this MCMC algorithm in \cref{alg-down-up-kndpp}.
% , which we call the ``down-up'' random walk.

\begin{algorithm}[t]
	\caption{MCMC Sampling for $k$-NDPP}  \label{alg-down-up-kndpp}
	\setstretch{1.15}
	\begin{algorithmic}[1]
		\STATE {\bf Input}: $\L \in \Rbb^{n\times n}$, $k \in \mathbb{N}$, $\tmix \in \mathbb{N}$
        \STATE Select $S \in \binom{[n]}{k}$ uniformly at random
		\FOR{$t=1, \dots, \tmix$ } 
		\STATE Select $A \in \binom{S}{k-2}$ uniformly at random
		\STATE Select $a,b \in [n]$ with probability $\propto \det(\L_{A \cup \{a,b\}})$ \label{alg-down-up-kndpp-bottleneck} \\
		 ($\triangleright$ Run \cref{alg-up-operator-rejection-sampling})
		\STATE{$S \gets A \cup \{a,b\}$}
		\ENDFOR
		\STATE {\bf Return} $S$
	\end{algorithmic}
\end{algorithm}

\citet{alimohammadi2021fractionally} proved that the mixing time, i.e., the minimum number of iterations required to approximate the target distribution within $\varepsilon$ in terms of total variation distance, is bounded by a polynomial in $k$.
\begin{proposition*}[Theorem 11 in~\citep{alimohammadi2021fractionally}]
For any $\varepsilon >0$, a sample $S$ obtained from \cref{alg-down-up-kndpp} with 
\begin{align} \label{eq:mixing-time} 
	\tmix=\bigo\left(k^2 \cdot \log\left(\frac{1}{\varepsilon \cdot \Pr(S_0)}\right)\right),
\end{align}
and randomly chosen subset $S_0 \in \binom{[n]}{k}$, the total variation distance to the target $k$-NDPP distribution is guaranteed to be less than $\varepsilon$.
\end{proposition*}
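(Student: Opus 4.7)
The plan is to prove rapid mixing of the pair-exchange down-up walk by invoking the theory of fractionally log-concave (FLC) distributions. The first step is routine: I would verify that the Markov chain in \cref{alg-down-up-kndpp} is reversible with stationary measure $\pi(S) \propto \det(\L_S)$ over $S \in \binom{[n]}{k}$. Detailed balance follows by inspection of the transition kernel, since the down-step removes a uniform $2$-element subset and the up-step reinserts a pair $\{a,b\}$ with probability proportional to $\det(\L_{A \cup \{a,b\}})$; the product of the two transition rates between $S$ and $S'$ is symmetric in $S,S'$.

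The core technical ingredient is to establish that the $k$-NDPP is $1/2$-fractionally log-concave, i.e.\ that the multiaffine generating polynomial $g(\x) = \sum_{|S|=k} \det(\L_S) \prod_{i \in S} x_i$ satisfies concavity of $g^{1/2}$ on $\Rbb_{>0}^n$. For symmetric DPPs this follows from real stability of $\det(\I + \diag(\x) \L)$ together with standard Borcea--Br\"and\'en machinery. In the nonsymmetric PSD setting $\L + \L^\top \succeq 0$, real stability fails but a weaker sector-stability property holds: I would show that, as a univariate polynomial in any $x_i$ with the other variables fixed in the right half-plane, $\det(\I + \diag(\x)\L)$ has no roots in a complex sector of angle $\pi/2$ around the positive real axis. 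This reduces to controlling the field of values of $\diag(\x)\L$ under positive diagonal perturbations, exploiting $\L + \L^\top \succeq 0$, and then checking that restricting to the degree-$k$ homogeneous part (via the identity in \cref{eq-kdpp-normalization-constant}) preserves sector stability.

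Given $1/2$-FLC, I would invoke the local-to-global principle for simplicial complexes: the spectral gap of the $(k-2) \leftrightarrow k$ down-up walk is bounded below by $\Omega(1/k^2)$, where one factor of $1/k$ comes from the single-step gap at level $k-2$ and the other from the chain of local walk comparisons down to the $2$-dimensional link, each of whose Markov operators inherits sector stability. Converting the spectral gap to a mixing time via the standard inequality $\tmix(\varepsilon) \le (\text{gap})^{-1} \log(1/(\varepsilon \, \pi_{\min}))$, and using $\pi_{\min} \ge \Pr(S_0)$ at the initial state $S_0$, yields the claimed bound $\tmix = \bigo(k^2 \log(1/(\varepsilon \Pr(S_0))))$.

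The hard part is the sector-stability / FLC step. Proving $1/2$-FLC for NDPPs is genuinely nontrivial because the natural generating polynomial has complex roots, so all of the real-stability shortcuts available in the symmetric case are unavailable. One must analyze the spectrum of $\diag(\x)\L$ via arguments tailored to matrices with bounded numerical range (dissipative matrices), and then verify that sectorality is preserved under polarization and extraction of the degree-$k$ component. Everything else---reversibility, the conversion from spectral gap to $\tmix$, and the $k^2$ scaling---is standard once the fractional log-concavity of the $k$-NDPP is in hand.
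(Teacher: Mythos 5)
First, a point of calibration: the paper does not prove this statement at all --- it is quoted verbatim, with citation, as Theorem 11 of \citet{alimohammadi2021fractionally}, and the paper's own contribution (\cref{thm-total-runtime}) is orthogonal to it, namely making each transition of \cref{alg-down-up-kndpp} cheap. So there is no internal proof to compare against; what you have written is essentially a reconstruction of the cited work's strategy: reversibility of the $2$-exchange down-up walk, sector stability of the NDPP generating polynomial, fractional log-concavity, a local-to-global argument giving a spectral gap of order $k^{-2}$ for the $(k)\leftrightarrow(k-2)$ walk, and the standard conversion $\tmix(\varepsilon)\le \mathrm{gap}^{-1}\log\bigl(1/(\varepsilon\,\Pr(S_0))\bigr)$ from the chain started at $S_0$. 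As an outline this is the right route, and it is the only known route.

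As a proof, however, it is an outline and not more: every step that carries the actual content of Theorem 11 is asserted rather than established. Concretely, (i) the sector-stability claim is stated for $\det(\I+\mathrm{diag}(\x)\L)$, but the delicate point is precisely the one you wave at in one clause --- that passing to the degree-$k$ homogeneous part (the $k$-constrained generating polynomial) preserves stability, which fails for half-plane stability and is exactly why the aperture must be pinned down to $\pi/2$ and why single-item exchange does not suffice; (ii) the implication ``sector-stable with a given aperture $\Rightarrow$ $\alpha$-FLC with a specific $\alpha$'' and the quantitative local-to-global bound ``$\alpha$-FLC $\Rightarrow$ gap $\Omega(k^{-2})$ for the $2$-step down-up walk'' are the two main theorems of the cited paper, and your sketch gives no argument for either; your heuristic of multiplying a per-level factor of $1/k$ by a link-comparison factor of $1/k$ is not how that analysis goes. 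Also check your definition of $1/2$-FLC (concavity of $g^{1/2}$ on the positive orthant is not the definition used in that line of work) and fix the last conversion step: $\pi_{\min}\ge \Pr(S_0)$ is false in general (the minimum stationary probability is at most $\Pr(S_0)$); what you want is the starting-state-dependent bound involving $\Pr(S_0)$ directly, which is how the $\Pr(S_0)$ term in \cref{eq:mixing-time} arises. In short: correct skeleton, identical in spirit to the source the paper cites, but the three load-bearing lemmas are missing, so as a standalone proof of the proposition it has genuine gaps.
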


Note that each iteration of MCMC sampling (line 5 in \cref{alg-down-up-kndpp-bottleneck}) needs to compute determinants of $k$-by-$k$ matrices for $\bigo(n^2)$ candidates, and therefore runs in time $\bigo(n^2 k^3)$. We call this step the ``up operator''.

\section{Scalable MCMC Sampling for $k$-NDPPs}

As mentioned above, the na\"ive up operator, which involves an exhaustive search over the space of possible candidates, requires time complexity that is quadratic in the ground set size $n$.  
This runtime clearly suffers from scalability issues for large $n$. 
In this section, we show how to significantly accelerate the up operator by utilizing the low-rank structure of the kernel matrix. 

We first observe that the up operator is equivalent to sampling a size $2$ subset from the NDPP conditioned on $A$.  Formally, given a low-rank NDPP kernel $\L = \X \W \X^\top$ for $\X \in \Rbb^{n \times d}, \W \in \Rbb^{d \times d}$, and a subset $A\subseteq [n], |A|\leq d$, one can check that $\det\left(\L_{A \cup \{a,b\}}\right) \propto \det(\L^A_{\{a,b\}})$, where $\L^A$ is the kernel of the conditional NDPP on $A$, given by
\begin{align} \label{eq-conditioned-ndpp}
    \L^A := \L - \L_{:,A} \L_A^{-1} \L_{A,:} 
    &= \X \W^A \X^\top,
\end{align}
where $\W^A:=\W-\W\X_{A,:}^\top(\X_{A,:}\W\X_{A,:}^\top)^{-1}\X_{A,:}\W$.
Note that computing $\W^A$ requires a matrix inversion of dimension $|A| \le d$ and matrix-matrix multiplications of dimension $d$, which results in $\bigo(d^3)$ operations in total.  Therefore, the up operator can be seen as sampling a size $2$ subset from the NDPP with kernel $\L^A$. 
In the next section, we present an approach for efficiently sampling from this conditional $2$-NDPP.

\subsection{Up Operator via Rejection Sampling} \label{sec-rejection-based-up-operator}

Our goal is an efficient sampling from a $2$-NDPP whose kernel is given by~\cref{eq-conditioned-ndpp}.
To this end, we utilize recent work on a sublinear-time NDPP rejection sampling algorithm~\cite{anonymous2022scalable}. 

Specifically, given a NDPP kernel $\L^A$, assume that there exists a matrix $\Lhat$ such that
\begin{align} \label{eq-proposal-dpp}
    \det ( \L_S^A ) \leq \det ( \Lhat_S )
\end{align}
for every $S \subseteq [n]$.  The rejection sampling method proceeds as follows: first, draw a sample $S$ from the DPP with kernel $\Lhat$ and accept it with probability $\det(\L_S^A) / \det(\Lhat_S)$, otherwise repeat the draws until $S$ is accepted.  The resulting sample $S$ has probability proportional to $\det(\L_S^A)$.  The distribution from which we actually draw a sample (i.e., the DPP with $\Lhat$) is called the \emph{proposal distribution}.
Furthermore, if $\Lhat$ is symmetric, one can make use of several symmetric DPP sampling algorithms.
In particular, we adopt a sublinear-time tree-based method~\cite{gillenwater2019tree} for our scalable MCMC sampling algorithm, which we describe in more detail in~\cref{sec-tree-based-sampling}.

\citet{anonymous2022scalable} provided a proposal distribution with kernel $\Lhat$, based on a spectral decomposition of $\X \W^A \X^\top$, and shows that it satisfies~\cref{eq-proposal-dpp}. When $\L$ is given by a rank-$d$ factorization, this spectral decomposition has a runtime of $\bigo(nd^2)$.  However, this complexity makes the cost of the preprocessing steps for the sampler dominant when the subsequent sampling from the DPP with $\Lhat$ is performed in sublinear-time in $n$ (e.g., using tree-based sampling).  Thus, we would not fully utilize the advantages of a scalable DPP sampling algorithm. We resolve this issue by developing a more efficient procedure for constructing the proposal DPP.

\begin{algorithm}[t]
    \setstretch{1.2}
	\caption{Up Operator via Rejection Sampling} \label{alg-up-operator-rejection-sampling}
	\begin{algorithmic}[1]
		\STATE{ {\bf Input:} $A \subseteq [n]$, $\X\in\Rbb^{n\times d}, \W\in\Rbb^{d \times d}$}
		\STATE $\W^A \gets  \W - \W \X_{A,:}^\top (\X_{A,:} \W \X_{A,:}^\top)^{-1} \X_{A,:} \W$
        \STATE$\{(\sigma_i, \y_i, \z_i)\}_{i=1}^{{d}/{2}}\gets$ Youla decomp. of $\frac{\W^A-\W^A{}^\top}{2}$
		\STATE $\widehat{\W}^A \gets \frac{\W^A + \W^A{}^\top}{2} + \sum_{i=1}^{d/2} \sigma_i \left( \y_i \y_i^\top + \z_i \z_i^\top\right)$
		\WHILE{\text{true}}
        \STATE Sample $\{a,b\}$ with prob. $\propto \det( [\X \widehat{\W}^A \X^\top]_{\{a,b\}})$ \\
        ($\triangleright$ Run \cref{alg-tree-based-kdpp} )
        \IF{$\mathcal{U}([0,1]) \le \frac{\det\left( [\X \W^A \X^\top]_{\{a,b\}} \right)}{\det\left([\X \widehat{\W}^A \X^\top]_{\{a,b\}}\right)}$}
		\STATE {\bf Return } $\{a,b\}$
        \ENDIF
		\ENDWHILE
	\end{algorithmic}
\end{algorithm}

Our key idea is to apply a similar spectral decomposition approach for computing the $d$-by-$d$ matrix $\W^A$, which allows us to compute the proposal DPP kernel in time $\bigo(d^3)$.
More specifically, we begin with writing the spectral decomposition of the skew-symmetric matrix $\frac{\W^A -{\W^A}^\top}{2}$ as
\begin{align} \label{eq-youla-WA}
    \frac{\W^A-\W^A{}^\top}{2} = \sum_{i=1}^{d/2} \begin{bmatrix} \y_i~~\z_i \end{bmatrix}  \begin{bmatrix} 0 & \sigma_i \\ -\sigma_i & 0\end{bmatrix} \begin{bmatrix} \y_i^\top \\ \z_i^\top \end{bmatrix},
\end{align}
where $\{\y_i,\z_i\}_{i=1}^{d/2}$ is a set of eigenvectors, and the $\sigma_i$'s are the nonnegative eigenvalues. 
The above decomposition is also known as the Youla decomposition~\cite{youla1961normal}.
Given this, we define a symmetric matrix $\widehat{\W}^A$ as follows:
\begin{align} \label{eq-spec-sym}
    \hspace{-0.1in} \widehat{\W}^A \coloneqq \frac{\W^A + \W^A}{2} + \sum_{i=1}^{d/2} \begin{bmatrix} \y_i~~\z_i \end{bmatrix}\begin{bmatrix} \sigma_i & 0 \\ 0 & \sigma_i \end{bmatrix} \begin{bmatrix} \y_i^\top \\ \z_i^\top \end{bmatrix}.
\end{align}
An important property is that every determinant of a principal submatrix of $\widehat{\W}^A$ is equal to or greater than that of $\W^A$, i.e., $\det(\W^A_S) \leq \det(\widehat{\W}^A_S)$ for all $S \subseteq [d]$, as shown in \citep[Theorem 1]{anonymous2022scalable}.
We further prove that this property is preserved under the bilinear transformation $\W^A \rightarrow \X \W^A \X^\top$ for any $\X \in \Rbb^{n \times d}$.

\begin{restatable}{theorem}{thmdppupperbound} \label{thm-ndpp-upper-bound}
	Given $\X \in \Rbb^{n \times d}$ and $\W^A \in \Rbb^{d \times d}$, suppose $\widehat{\W}^A$ is obtained from \cref{eq-spec-sym} with $\W^A$. Then, % it holds that 
	\begin{align} \label{eq-ndpp-new-proposal}
	    \det([\X \W^A \X^\top]_S) \leq \det([\X \widehat{\W}^A \X^\top]_S)
	\end{align}
	for every $S\subseteq [n]$. In addition, equality holds when $|S|\ge d$.
\end{restatable}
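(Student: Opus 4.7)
The plan is to reduce the $n$-dimensional inequality to Theorem~1 of~\citet{anonymous2022scalable} (the $d\times d$ analogue of the same statement) via a matrix-determinant-lemma argument on the rank-$d$ ``skew'' perturbation that separates $\X\W^A\X^\top$ from $\X\widehat{\W}^A\X^\top$. Letting $Y=[\y_1,\z_1,\ldots,\y_{d/2},\z_{d/2}]\in\Rbb^{d\times d}$ be the orthogonal matrix of Youla eigenvectors and writing $J_2:=\begin{bsmallmatrix}0 & 1 \\ -1 & 0\end{bsmallmatrix}$, both kernels factor as
$$\W^A=\W^A_{\mathrm{sym}}+Y\Sigma_{\mathrm{sk}}Y^\top,\qquad \widehat{\W}^A=\W^A_{\mathrm{sym}}+Y\Sigma' Y^\top,$$
where $\Sigma_{\mathrm{sk}}=\mathrm{Diag}(\sigma_1 J_2,\ldots,\sigma_{d/2}J_2)$ is skew and $\Sigma'=\mathrm{Diag}(\sigma_1 I_2,\ldots,\sigma_{d/2}I_2)$ is PSD. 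Setting $M=\X\W^A_{\mathrm{sym}}\X^\top$ and $Z=\X Y\in\Rbb^{n\times d}$, both target principal submatrices take the common form $M_S+Z_{S,:}\Sigma Z_{S,:}^\top$.

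The equality for large $|S|$ is immediate: both $\X\W^A\X^\top$ and $\X\widehat{\W}^A\X^\top$ have rank at most $d$, so every $|S|\times|S|$ principal minor with $|S|>d$ (and, in the conditional setting where $\W^A$ further satisfies $\X_{A,:}\W^A=0$, already for smaller $|S|$) vanishes on both sides. For $|S|\le d$ I would apply the matrix determinant lemma to the rank-$d$ update and use $\det(I+BC)=\det(I+CB)$ to obtain
$$\det([\X\W\X^\top]_S)=\det(M_S)\,\det\!\bigl(I_d+\Sigma\,A\bigr),\qquad A:=Z_{S,:}^\top M_S^{-1}Z_{S,:}\succeq 0,$$
assuming $M_S$ invertible (singular $M_S$ handled by a standard $\varepsilon$-regularization $M_S\rightsquigarrow M_S+\varepsilon I$ followed by $\varepsilon\to 0$). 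Since $\det\Sigma_{\mathrm{sk}}=\det\Sigma'=\prod_i\sigma_i^2>0$, the target inequality $\det(I+\Sigma_{\mathrm{sk}}A)\le\det(I+\Sigma' A)$ is equivalent, after factoring, to
$$\det(A+\Sigma_{\mathrm{sk}}^{-1})\;\le\;\det(A+\Sigma'^{-1}).$$

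The decisive observation is that $\Sigma_{\mathrm{sk}}^{-1}=\mathrm{Diag}(-\sigma_1^{-1}J_2,\ldots,-\sigma_{d/2}^{-1}J_2)$ is itself skew, with its Youla decomposition already supported in the standard basis and Youla singular values $\sigma_i^{-1}$; its Youla-hat is precisely $\Sigma'^{-1}=\mathrm{Diag}(\sigma_1^{-1}I_2,\ldots,\sigma_{d/2}^{-1}I_2)$. Consequently, the auxiliary $d\times d$ matrix $\W^\star:=A+\Sigma_{\mathrm{sk}}^{-1}$ has PSD symmetric part $A$ and Youla-hat $\widehat{\W^\star}=A+\Sigma'^{-1}$, so Theorem~1 of~\citet{anonymous2022scalable} applied at the full index set $T=[d]$ gives $\det\W^\star\le\det\widehat{\W^\star}$, closing the proof. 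The main obstacle is recognizing the correct auxiliary matrix $\W^\star$ so that the $n$-dimensional claim reduces cleanly to the $d$-dimensional one; the remaining ingredients---the matrix determinant lemma, the cyclic identity $\det(I+BC)=\det(I+CB)$, and continuity arguments for degenerate cases (singular $M_S$ or vanishing $\sigma_i$)---are routine bookkeeping.
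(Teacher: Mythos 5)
Your proof of the inequality is correct in substance, but it takes a genuinely different route from the paper's. The paper also splits off the symmetric part $\G=\X\bigl(\tfrac{\W^A+{\W^A}^\top}{2}\bigr)\X^\top$, but then stays on the $|S|$-dimensional side: it writes $\det([\X\W^A\X^\top]_S)=\det(\G_S^{1/2})\det(\I+\G_S^{-1/2}\A_S\G_S^{-1/2})\det(\G_S^{1/2})$ (and likewise for the hatted kernel), expands each $\det(\I+\cdot)$ as a sum of principal minors, and compares the two sums term by term by applying \citet{anonymous2022scalable}'s Theorem~1 to the pair $\R=C\,\mathrm{Diag}(\sigma_1 J_2,\dots)\,C^\top$ versus $\widehat{\R}=C\,\mathrm{Diag}(\sigma_1 I_2,\dots)\,C^\top$ with the non-orthogonal outer factor $C=\G_S^{-1/2}\X_{S,:}\V$; note $\widehat{\R}$ is not the Youla symmetrization of $\R$ itself, so this step needs the prior result in its factored (outer-matrix) form. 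Your route instead collapses everything to a single $d\times d$ comparison via the matrix determinant lemma and Sylvester's identity, and the change of variables $\det(\I+\Sigma A)=\det(\Sigma)\det(\Sigma^{-1}+A)$ is the nice twist: the auxiliary matrix $\W^\star=A+\Sigma_{\mathrm{sk}}^{-1}$ has PSD symmetric part $A=Z_{S,:}^\top M_S^{-1}Z_{S,:}$ and its skew part is already in canonical block form, so its Youla symmetrization is exactly $\Sigma'^{-1}$ and you only need the vanilla form of the prior theorem (the one quoted in Section~3.1), invoked once at the full index set. The degenerate cases you defer (singular $M_S$, vanishing $\sigma_i$) are indeed routine limits, since the reduced claim $\det(M_S+Z_{S,:}\Sigma_{\mathrm{sk}}Z_{S,:}^\top)\le\det(M_S+Z_{S,:}\Sigma'Z_{S,:}^\top)$ is a continuous statement about an arbitrary symmetric PSD $M_S$ and arbitrary $\sigma_i\ge 0$. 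In short, both proofs reduce to the $d$-dimensional result of \citet{anonymous2022scalable}; yours uses it in a weaker form and avoids the minor-by-minor bookkeeping, at the price of the $\varepsilon$-regularizations.

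One caveat: your rank argument only yields equality for $|S|>d$, not for $|S|=d$ as the statement claims. At $|S|=d$ the claim amounts to $\det(\W^A)=\det(\widehat{\W}^A)$, which fails for a generic $\W^A$ (e.g.\ $\W^A=\begin{bsmallmatrix}1&\sigma\\-\sigma&0\end{bsmallmatrix}$ gives $\sigma^2$ versus $\sigma(1+\sigma)$) and holds in the paper's setting only because of the extra structure of the conditional kernel: when $A\neq\emptyset$ one has $\X_{A,:}\W^A=0$ and $\W^A\X_{A,:}^\top=0$, so both $\W^A$ and $\widehat{\W}^A$ are rank-deficient and both sides vanish, while for $A=\emptyset$ the symmetric and skew blocks of $\W$ live on orthogonal coordinates so the two determinants coincide. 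The paper's own proof is also silent on this point, so this is a gap in the secondary equality claim rather than in your main argument, but it is worth flagging rather than asserting equality from rank considerations alone.
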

We provide the proof of \cref{thm-ndpp-upper-bound} in \cref{sec-proof-thm-ndpp-upper-bound}. \cref{thm-ndpp-upper-bound} allows us to use rejection sampling, with the kernel $\Lhat:=\X \widehat{\W}^A \X^\top$ as the proposal distribution.

Pseudo-code for the up operator computed using rejection sampling is shown in \cref{alg-up-operator-rejection-sampling}.  
Observe that $\widehat{\W}^A$ can be computed in time $\bigo(d^3)$ (lines 2-4 in \cref{alg-up-operator-rejection-sampling}), because both matrix operations involve matrices with dimension $d$, and the Youla decomposition of $\frac{\W^A -{\W^A}{}^\top}{2}$, have complexities $\bigo(d^3)$.  Therefore, we can build each kernel component for the proposal DPP in time $\bigo(d^3)$. 
This improves the previous method with runtime $\bigo(nd^2)$, since $d \ll n$, and potentially allows us to utilize the sublinear-time sampling algorithm. 
In the next section, we discuss the tree-based $k$-DPP sampling algorithm that uses our proposal DPP.

\subsection{Sublinear-time Tree-based Sampling} \label{sec-tree-based-sampling}

We now focus on sampling  the $2$-DPP with kernel $\Lhat = \X \widehat{\W}^A \X^\top$ (line 6 in \cref{alg-up-operator-rejection-sampling}).
Observe that the matrix $\X\in\Rbb^{n \times d}$ remains unchanged, and only the inner matrix $\widehat{\W}^A \in \Rbb^{d \times d}$ changes in every iteration of the MCMC sampling algorithm. Fortunately, the sublinear-time tree-based DPP sampling algorithm~\cite{gillenwater2019tree} is well suited to this type of kernel structure. We build a binary tree using $\X$, which can be used for $2$-DPP sampling with the kernel $\X \widehat{\W}^A \X^\top$, and then the sampling process is equivalent to $k$ tree traversals with a $d$-by-$d$ query matrix. Consequently, $2$-DPP sampling can be done in time $\bigo(d^2 \log n + d^3)$.

We begin by explaining the workflow for tree-based $k$-DPP sampling, where we set $k$ to 2.
Formally, denote $\U := (\widehat{\W}^A)^{-\frac12}$, and let $\{(\lambda_i, \v_i)\}_{i=1}^d$ be the eigendecomposition of $\U (\X^\top\X) \U$. 
From \citep[Eq. (187)]{kulesza2012determinantal}, the probability of sampling $S \in \binom{[n]}{k}$ from the $k$-DPP with $\Lhat$ can be decomposed into the following
\begin{align}  \label{eq-dpp-decomposition}
    \frac{\det(\Lhat_S)}{e_k(\{\lambda_i\}_{i=1}^d)} = \sum_{E \in \binom{[d]}{k}}  \frac{\prod_{i \in E} \lambda_i}{e_k(\{\lambda_i\}_{i=1}^d)} 
    \cdot \det( \K^E_S ),
\end{align}
where $\K^E := \X\U \left(\sum_{i\in E} \lambda_i^{-1} \v_i\v_i^\top\right)\U\X^\top$, and $e_k$ is the elementary symmetric polynomial defined in \cref{eq-kdpp-normalization-constant}.  We observe that $\K^E$ is a rank-$k$ projection matrix, because
\begin{align}
    \K^E=\sum_{i\in E} \frac{\X\U\v_i}{\sqrt{\lambda_i}} \left(\frac{\X\U\v_i}{\sqrt{\lambda_i}}\right)^\top,
\end{align}
and $\frac{\X\U\v_i}{\sqrt{\lambda_i}}$'s are the eigenvectors of $\Lhat$~\citep[Proposition 3.1]{kulesza2012determinantal}. Any projection matrix can define a DPP with a marginal kernel, called an \emph{elementary} DPP.
\cref{eq-dpp-decomposition} allows the following two-step $k$-DPP sampling procedure: 1) select an index set $E \in \binom{[d]}{k}$ with probability $\frac{ \prod_{i\in E} {\lambda}_i }{e_k(\{\lambda_i\}_{i=1}^d)}$, and then 2) sample a subset $S$ from the elementary DPP with kernel $\K^E$. 
As studied in~\citep[Algorithm 8]{kulesza2012determinantal}, step 1) can be efficiently performed using the recursive property of $e_k$ introduced in~\cref{eq-elementary-symmetric-polynomials-recursion}, resulting in $\bigo(dk)$ runtime. 
Notice that step 2) is a computational bottleneck for $k$-DPP sampling. However, this step can be accelerated using tree-based sampling, which we describe next.

Specifically, 
let $S \subseteq [n]$ be a subset that we wish to sample. For any $Y \subseteq [n]$ and $a \notin Y$ observe that
\begin{align}
    &\mathcal{P}_{\K^E}(a\in S | Y \subseteq S) = \frac{\det(\K^E_{Y \cup \{a\}})}{\det(\K^E_Y)} \nonumber \\
    &=\K^E_{a,a} - \K^E_{a,Y} (\K^E_Y)^{-1} \K^E_{Y,a} 
    =\inner{\Q^Y, \x_a^\top \x_a}, \label{eq-inner-product-probability}
\end{align}
where $\Q^Y:=\M - \M\X_{Y,:}^\top (\X_{Y,:}\M \X_{Y,:}^\top)^{-1}\X_{Y,:}\M$, $\M:=\U \left(\sum_{i\in E} \lambda_i^{-1} \v_i\v_i^\top\right)\U$, and $\x_a \in \Rbb^d$ is the $a$-th row vector in $\X$.  This implies that we can begin with $Y \gets \emptyset$ and iteratively append $a$ to $Y$, where $a$ is selected with the probability described in \cref{eq-inner-product-probability}.  The process of selecting a single element can be done in a divide-and-conquer manner by leveraging a  binary tree structure.

\begin{algorithm}[t]
	\caption{Tree-based $k$-DPP Sampling} \label{alg-tree-based-kdpp}
	\setstretch{1.2}
	\begin{algorithmic}[1]
	    \STATE {\bf Input:} $\X \in\Rbb^{n \times d}, \widehat{\W}^A \in \Rbb^{d \times d}$, $\C=\X^\top\X\in\Rbb^{d \times d}$, tree structure $\mathcal{T}$
	    \STATE $\U \gets (\widehat{\W}^A)^{-\frac12}$
	    \STATE $\{(\v_i,\lambda_i)\}_{i=1}^d \gets $ Eigendecomp. of $\U \C \U^\top$
	    \STATE Select size $k$ subset $E$ with prob. $\propto \prod_{i \in E} \lambda_i$ \\ ($\triangleright$ Run Algorithm 8 in~\cite{kulesza2012determinantal})
	    \STATE $\Q \gets \U \left( \sum_{i \in E} \lambda_i^{-1}~\v_i \v_i^\top \right) \U^\top$
	    \STATE $Y \gets \emptyset$
	    \FOR{$j=1, \dots, k$}
	        \STATE Sample $a$ with probability $\inner{\Q ,\x_a \x_a^\top}$ using the tree structure $\mathcal{T}$
	         ($\triangleright$ Run Algorithm 3 in~\cite{anonymous2022scalable})
	        \STATE $Y \gets Y \cup \{a\}$
	        \STATE $\Q \gets \Q - \Q  \X_{Y,:}^\top \left( \X_{Y,:} \Q \X_{Y,:}^\top \right)^{-1} \X_{Y,:} \Q$
	    \ENDFOR
	    \STATE {\bf Return } $Y$
	\end{algorithmic}
\end{algorithm}

We construct a binary tree where the root contains $[n]$ and assigns a partition $A_\ell, A_r$ of $[n]$ to its left and right nodes. The branching proceeds until $n$ leaf nodes are created. In addition, every non-leaf node contains a $d$-by-$d$ matrix $\sum_{a \in A} \x_{a}^\top \x_{a}$, where $A$ is the stored subset.  Sampling a single element can be done by traversing the tree with the query matrix $\Q^Y$. In every non-leaf node containing a subset $A$, we move down to the left branch with probability 
\begin{align} \label{eq-left-probability}
    \frac{\inner{\Q^Y, \sum_{a \in A_\ell} \x_a^\top \x_a}}{\inner{\Q^Y, \sum_{a \in A} \x_a^\top \x_a}}
\end{align}
or otherwise to the right branch, until we reach a leaf node.  The tree traversal process is repeated for $k$ iterations, because every subset sampled from the elementary DPP has exactly $k$ elements. 
If we construct a binary tree of depth $\bigo(\log n)$, which requires time $\bigo(nd^2)$, then step 2) can run in time $\bigo(k d^2 \log n + k^2 d^2)$. We summarize the tree-based $k$-DPP sampling in \cref{alg-tree-based-kdpp} and provide the overall runtime in \cref{thm-runtime-tree-based-kdpp}.

\begin{restatable}{theorem}{thmruntimetreebasedkdpp} \label{thm-runtime-tree-based-kdpp}
    Given $\X\in\Rbb^{n \times d}$, and symmetric and PSD $\widehat{\W}^A \in \Rbb^{d \times d}$, \cref{alg-tree-based-kdpp} samples a subset from the $k$-DPP with kernel $\L = \X \widehat{\W}^A \X^\top$, and runs in time $\bigo(k d^2 \log n + k^2 d^2 + d^3)$, after a one-time preprocessing step that runs in time $\bigo(nd^2)$.
\end{restatable}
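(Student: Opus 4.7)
The plan is to split the argument into a correctness claim — that \cref{alg-tree-based-kdpp} samples from the $k$-DPP with kernel $\Lhat = \X\widehat{\W}^A\X^\top$ — and a line-by-line runtime accounting. For correctness, I would first identify the eigendecomposition of $\Lhat$ from the quantities in lines~2--3. Taking $\U$ to be the symmetric square root of $\widehat{\W}^A$ (as in line~2, up to the choice of sign convention), the nonzero eigenvalues of $\Lhat = (\X\U)(\X\U)^\top$ coincide with those of $\U(\X^\top\X)\U$, and the normalized eigenvectors of $\Lhat$ are $\X\U\v_i/\sqrt{\lambda_i}$. The matrix $\K^E$ built in line~5 is then exactly the rank-$k$ projection onto $\mathrm{span}\{\X\U\v_i : i\in E\}$, i.e., an elementary DPP kernel, and \cref{eq-dpp-decomposition} is the standard mixture decomposition of a symmetric $k$-DPP into elementary DPPs. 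Line~4 samples $E$ with the correct mixture weight $\prod_{i\in E}\lambda_i/e_k(\{\lambda_i\})$ via the recursion \cref{eq-elementary-symmetric-polynomials-recursion}, reducing the task to sampling from $\K^E$.

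For the inner loop (lines~7--11), I would invoke \cref{eq-inner-product-probability} to rewrite the sequential conditional marginal as $\Pr_{\K^E}(a \in S \mid Y \subseteq S) = \iprod{\Q^Y}{\x_a^\top\x_a}$, with $\Q^Y$ the Schur complement $\M - \M\X_{Y,:}^\top (\X_{Y,:}\M\X_{Y,:}^\top)^{-1}\X_{Y,:}\M$ and $\M = \U(\sum_{i\in E}\lambda_i^{-1}\v_i\v_i^\top)\U$. A direct computation shows that the update of $\Q$ in line~10 preserves the invariant $\Q = \Q^Y$ as elements are added to $Y$. The tree traversal in line~8 then returns a leaf $a$ with probability proportional to $\iprod{\Q^Y}{\x_a^\top\x_a}$ via the recursive splitting rule \cref{eq-left-probability}: at each internal node the numerators of the two children's branch probabilities sum to the parent's, so iterated application yields a leaf-level distribution matching the target marginal. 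Correctness of this traversal is the content of Algorithm~3 of~\cite{anonymous2022scalable}, which I would cite rather than re-derive.

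For the runtime, the binary tree $\Tcal$ is built once in $\bigo(nd^2)$ time (each of $\bigo(n)$ nodes stores a $d\times d$ summary matrix $\sum_{a\in A}\x_a^\top\x_a$) and is independent of $\widehat{\W}^A$, so it is safely amortized preprocessing. Within \cref{alg-tree-based-kdpp}, lines~2, 3, and 5 each cost $\bigo(d^3)$ for a symmetric square root, an eigendecomposition, and the assembly of $\Q$ on $d\times d$ matrices; line~4 runs in $\bigo(dk)$ via \cref{eq-elementary-symmetric-polynomials-recursion}. Each of the $k$ traversals in line~8 visits $\bigo(\log n)$ nodes and performs a single Frobenius inner product of two $d\times d$ matrices per node, contributing $\bigo(kd^2\log n)$ in total. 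At iteration $j$, the update in line~10 involves products of a $j\times d$ slice $\X_{Y,:}$ with the $d\times d$ matrix $\Q$ and an $\bigo(j^3)$ inversion, which sits inside $\bigo(jd^2)$; summing $j=1,\dots,k$ gives $\bigo(k^2 d^2)$. Adding these contributions yields the claimed $\bigo(kd^2\log n + k^2 d^2 + d^3)$.

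The main obstacle, modest as it is, is not analytical but bookkeeping: verifying (i)~that the Schur-complement update in line~10 does realize the $\Q^Y$ of \cref{eq-inner-product-probability} under the identification $\K^E = \X\M\X^\top$, and (ii)~that a tree whose node statistics depend only on the fixed features $\X$ nonetheless returns the correct marginal when the query matrix $\Q$ changes from iteration to iteration. Both reduce to the identities \cref{eq-inner-product-probability,eq-left-probability}, so no new machinery is required beyond invoking the two cited subroutines.
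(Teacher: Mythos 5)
Your proposal is correct and follows essentially the same route as the paper: the appendix proof of \cref{thm-runtime-tree-based-kdpp} is precisely the line-by-line cost accounting you give ($\bigo(nd^2)$ tree/Gram preprocessing, $\bigo(d^3)$ for the square root and eigendecomposition, $\bigo(dk)$ for selecting $E$, $\bigo(kd^2\log n)$ for the $k$ traversals, and $\bigo(k^2d^2)$ for the Schur-complement updates of $\Q$), while the correctness part you spell out is the elementary-DPP mixture argument the paper develops in \cref{sec-tree-based-sampling} via \cref{eq-dpp-decomposition,eq-inner-product-probability,eq-left-probability}. The only cosmetic difference is that you rely on $\U$ being the square root of $\widehat{\W}^A$ (which is indeed what makes $\X\U\v_i/\sqrt{\lambda_i}$ eigenvectors of $\Lhat$), whereas the paper's line~2 writes the exponent as $-\tfrac12$; this does not affect the runtime claim.
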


We provide the proof of \cref{thm-runtime-tree-based-kdpp} in \cref{sec-proof-thm-runtime-tree-based-kdpp}.  
The runtime of our tree-based sampling algorithm improves that of previous work~\cite{gillenwater2019tree}, which is $\bigo(k^2 d^2 \log n + d^3)$.
We also remark that the binary tree used in \cite{anonymous2022scalable} is slightly different from ours. They build a tree using the eigenvectors of the kernel, while our tree structure is based on the non-orthogonal features $\X$. This allows our tree to be used for sublinear-time sampling for any DPP with kernel $\X\A\X^\top$, with an arbitrarily symmetric and PSD matrix $\A\in\Rbb^{d \times d}$, as is the case for our MCMC-based $k$-NDPP sampling approach.

We remind the reader that the rejection-based up operator requires sampling from a $2$-DPP (line 6 in \cref{alg-up-operator-rejection-sampling}).  From \cref{thm-runtime-tree-based-kdpp}, sampling from the proposal distribution runs in $\bigo(d^2 \log n + d^3)$ time. 
However, as discussed in \cref{sec-rejection-based-up-operator}, this process is repeated until the sample is accepted.  In the next section, we examine the average number of rejections in \cref{alg-up-operator-rejection-sampling}.

\section{Runtime Analysis}

We first define the ratio of the largest and smallest singular values of the conditional kernel components, which will affect the average number of rejections.
\begin{definition} \label{def-condition-number}
Given $\X \in \Rbb^{n \times d}$ and $\W\in \Rbb^{d \times d}$, such that $\W + \W^\top \succeq 0$ and $A \in \binom{[n]}{k-2}$ for $k \ge 2$, consider $\W^A$ as defined in~\cref{eq-conditioned-ndpp}. Define 
\begin{align*}
    \kappa_A := \frac{\sigma_{\max}(\W^A-\W^{A^\top})}{\min_{Y \in \binom{[n]\setminus A}{2}}\sigma_{\min}([\X(\W^A+\W^{A}{}^\top)\X^\top]_Y)}.    
\end{align*}
and $\kappa := \max_{A \subseteq [n], |A|\leq d-2} \kappa_A$.
\end{definition}
We now provide an upper bound on the average number of rejections in \cref{alg-up-operator-rejection-sampling}.

\begin{restatable}{theorem}{thmnumrejectionsbound} \label{thm-num-rejections-bound}
    Given $\X\in\Rbb^{n \times d}$ and $\W \in \Rbb^{d \times d}$, such that $\W + \W^\top \succeq 0$ and $A \in \binom{[n]}{k-2}$ for $k\geq 2$, consider $\kappa_A$ as in \cref{def-condition-number}.
    Then, the average number of rejections of the rejecion-based up operator ~(\cref{alg-up-operator-rejection-sampling}) is no greater than $( 1 + \sigma_{\max}(\X)^2 ~\kappa_A)^2$.
\end{restatable}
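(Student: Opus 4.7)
The plan is to translate the expected number of rejections into a ratio of $2$-DPP normalizers and then bound that ratio pointwise. Since \cref{thm-ndpp-upper-bound} ensures $\det(\L^A_Y) \le \det(\Lhat_Y)$, the per-proposal acceptance probability $\det(\L^A_Y)/\det(\Lhat_Y)$ lies in $[0,1]$, and a direct computation shows that the expected number of trials equals $Z_{\Lhat}/Z_{\L^A}$, where $Z_{\cdot} := \sum_{Y \in \binom{[n]\setminus A}{2}} \det(\cdot_Y)$. Hence it suffices to prove the stronger pointwise bound $\det(\Lhat_Y)/\det(\L^A_Y) \le (1+\sigma_{\max}(\X)^2\kappa_A)^2$ for every $Y=\{a,b\}$, which immediately transfers to the ratio of the sums.

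Let $\W^A_s := \tfrac12(\W^A + (\W^A)^\top)$, $\W^A_a := \tfrac12(\W^A-(\W^A)^\top)$, and set $\P_Y := \X_Y \W^A_s \X_Y^\top$, $\S_Y := \X_Y \W^A_a \X_Y^\top$, and $\Q_Y := \X_Y(\widehat{\W}^A - \W^A_s)\X_Y^\top = \X_Y\bigl(\sum_i \sigma_i(\y_i\y_i^\top+\z_i\z_i^\top)\bigr)\X_Y^\top$, where $\X_Y := \X_{Y,:}$. Then $\L^A_Y = \P_Y + \S_Y$ and $\Lhat_Y = \P_Y + \Q_Y$, with $\P_Y,\Q_Y$ symmetric PSD and $\S_Y$ a $2\times 2$ skew matrix. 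I would then invoke the $2\times 2$ identity $\det(A+B) = \det A + \det B + \tr(\adj(A) B)$. The key cancellation is that $\adj(\P_Y)$ is symmetric while $\S_Y$ is skew, so $\tr(\adj(\P_Y)\S_Y) = 0$; together with $\det(\S_Y) \ge 0$ (automatic for real $2\times 2$ skew matrices), this gives $\det(\L^A_Y) \ge \det(\P_Y)$. Applying the same identity to $\Lhat_Y$ yields
\begin{align*}
\frac{\det(\Lhat_Y)}{\det(\L^A_Y)} \le \frac{\det(\P_Y) + \det(\Q_Y) + \tr(\adj(\P_Y)\Q_Y)}{\det(\P_Y)} = (1+\lambda_1)(1+\lambda_2),
\end{align*}
where $\lambda_1,\lambda_2 \ge 0$ are the eigenvalues of $\P_Y^{-1}\Q_Y$ (using $\adj(\P_Y)=\det(\P_Y)\P_Y^{-1}$ in $2\times 2$).

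It remains to control $\lambda_{\max}(\P_Y^{-1}\Q_Y) \le \|\Q_Y\|_2/\lambda_{\min}(\P_Y)$. Since $\{\y_i,\z_i\}$ are orthonormal from the Youla decomposition, the correction $\widehat{\W}^A - \W^A_s$ has operator norm $\max_i\sigma_i = \tfrac12\sigma_{\max}(\W^A-(\W^A)^\top)$, so submultiplicativity gives $\|\Q_Y\|_2 \le \|\X_Y\|_2^2\|\widehat{\W}^A - \W^A_s\|_2 \le \tfrac12 \sigma_{\max}(\X)^2\sigma_{\max}(\W^A-(\W^A)^\top)$. On the denominator side, $\lambda_{\min}(\P_Y) = \tfrac12\sigma_{\min}([\X(\W^A+(\W^A)^\top)\X^\top]_Y)$, so substituting into \cref{def-condition-number} gives $\lambda_{\max}(\P_Y^{-1}\Q_Y) \le \sigma_{\max}(\X)^2\kappa_A$, and the desired bound follows by monotonicity $(1+\lambda_1)(1+\lambda_2) \le (1+\lambda_{\max})^2$.

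The main obstacle is recognizing the $\tr(\adj(\P_Y)\S_Y)=0$ cancellation: without it, one would naturally try to compare $\det(\Lhat_Y)$ against $\det(\L^A_Y)$ directly, but $\L^A_Y$ is neither symmetric nor PSD, obscuring the spectral picture. Once this cancellation lets us replace the denominator by the purely symmetric $\det(\P_Y)$, the $2\times 2$ structure reduces everything to a clean estimate on the single quantity $\lambda_{\max}(\P_Y^{-1}\Q_Y)$. A minor technical point is to verify $\P_Y \succ 0$, which holds whenever $\kappa_A<\infty$ (otherwise the theorem is vacuous) and is consistent with $\W^A+(\W^A)^\top \succeq 0$, itself inherited from $\W+\W^\top\succeq 0$ through the conditioning formula~\cref{eq-conditioned-ndpp}.
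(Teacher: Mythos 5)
Your proof is correct and follows essentially the same route as the paper's: reduce the expected number of trials to the ratio of normalizers, bound it by the worst-case pointwise ratio, lower-bound the denominator by the determinant of the symmetric part, and control the remaining symmetric $2\times 2$ ratio via $\lambda_{\max}$ of the correction against $\lambda_{\min}$ of the symmetric part, yielding exactly $(1+\sigma_{\max}(\X)^2\kappa_A)^2$. The only differences are cosmetic: you use the $2\times 2$ adjugate identity and the eigenvalue product $(1+\lambda_1)(1+\lambda_2)$ where the paper uses the general principal-minor expansion of $\det(\I+\M)$ together with an AM--GM/trace argument, but the decompositions, key cancellations, and spectral bounds coincide.
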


{\it Proof Sketch.} 
First, we observe that the average number of rejections can be expressed as
\begin{align} \label{eq-average-num-rejections}
    \frac{ \sum_{Y \in \binom{[n]\setminus A}{2}} \det( [\X \widehat{\W}^A \X^\top]_{Y})}{ \sum_{Y\in \binom{[n]\setminus A}{2}} \det( [\X {\W}^A \X^\top]_{Y})}.
\end{align}
Instead of bounding the above directly, we consider $\max_{Y\in\binom{[n]\setminus A}{2}} \frac{\det( [\X \widehat{\W}^A \X^\top]_Y)}{\det( [\X {\W}^A \X^\top]_Y)}$, which upper bounds \cref{eq-average-num-rejections}. In addition, observing that the denominator is no less than $\sum_{Y\in\binom{[n]\setminus A}{2}}\det([\X (\frac{{\W}^A+\W^A{}^\top}{2})\X^\top]_{Y})$, we can derive the bound as a determinant of a $2$-by-$2$ symmetric and PSD matrix. This can be bounded by the singular values of the kernel.
A full proof is provided in \cref{sec-proof-thm-num-rejections-bound}. \hfill \qed

We observe that the matrices in the numerator and denominator of the factor $\kappa_A$ in \cref{def-condition-number} are bounded by the largest and smallest eigenvalues among some 2-by-2 matrices (see \cref{eq-condition-number-bound} in \cref{sec-proof-thm-num-rejections-bound}). There is no dependency on $d$ here, and therefore the number of rejections does not depend on either $n$ or $d$.
In \cref{sec-exp-runtime}, we empirically verify that the actual rejection numbers are very small compared to $n$ both for synthetic and real-world datasets.  For example, for the Book recommendation dataset with $n\simeq 10^6$~\cite{wan2018item}, we see only $3$ rejections on average.  This makes our rejection-based MCMC sampling algorithm practical for NDPPs with large $n$.

Putting all of the above together, we provide the overall runtime for our MCMC sampling algorithm for $k$-NDPPs in the following proposition.

\begin{restatable}{proposition}{thmtotalruntime} \label{thm-total-runtime}
    Given $\X\in\Rbb^{n \times d}$ and $\W \in \Rbb^{d \times d}$, such that $\W + \W^\top \succeq 0$ and $k\geq 2$, 
    consider $\kappa$ as defined in \cref{def-condition-number}.
    With a preprocessing step that runs in time $\bigo(nd^2)$,
    \cref{alg-down-up-kndpp} runs in time $\bigo(\tmix~(1+\sigma_{\max}(\X)^2~\kappa)^2~(d^2 \log n + d^3))$ in expectation.
\end{restatable}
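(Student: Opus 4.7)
The plan is essentially bookkeeping: combine the per-iteration cost of the MCMC loop in \cref{alg-down-up-kndpp} with the established bounds on proposal-sampling cost (\cref{thm-runtime-tree-based-kdpp}) and on the expected rejection count (\cref{thm-num-rejections-bound}), then multiply by $\tmix$ and add the preprocessing term.

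First, I would account for the one-time preprocessing. The binary tree $\mathcal{T}$ over the rows of $\X$ required by \cref{alg-tree-based-kdpp} is built once in $\bigo(nd^2)$ time by \cref{thm-runtime-tree-based-kdpp}; the $d\times d$ Gram matrix $\C=\X^\top\X$ is also computed and cached in $\bigo(nd^2)$ time. The random initial $S \in \binom{[n]}{k}$ takes time dominated by this preprocessing. Next, I would analyze a single iteration of the outer loop. Picking $A \in \binom{S}{k-2}$ uniformly costs $\bigo(k)$. Inside \cref{alg-up-operator-rejection-sampling}, the conditional kernel component $\W^A$ is built from an $|A|\times|A|$ inverse with $|A|\le d$ and matrix products of $d\times d$ matrices, costing $\bigo(d^3)$; the Youla decomposition of the skew-symmetric part of $\W^A$ and the assembly of $\widehat{\W}^A$ in \cref{eq-spec-sym} are also $\bigo(d^3)$.

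Then I would handle the rejection while-loop. By \cref{thm-runtime-tree-based-kdpp} specialized to $k=2$, each call to \cref{alg-tree-based-kdpp} that draws a $2$-subset from the proposal DPP with kernel $\X\widehat{\W}^A\X^\top$ runs in $\bigo(d^2 \log n + d^3)$ time once the tree is built. The acceptance ratio in line~7 only requires evaluating $2\times 2$ determinants of matrices formed from the rows $\x_a,\x_b$ of $\X$ and the stored $\widehat{\W}^A,\W^A$, which is $\bigo(d^2)$ and absorbed. Since the trials are i.i.d.\ with a deterministic per-trial cost (given $A$), Wald's identity gives expected total cost = (expected number of trials) $\times$ (per-trial cost). \cref{thm-num-rejections-bound} bounds the first factor by $(1+\sigma_{\max}(\X)^2\kappa_A)^2 \le (1+\sigma_{\max}(\X)^2\kappa)^2$, so the expected work of one invocation of \cref{alg-up-operator-rejection-sampling} is $\bigo\bigl((1+\sigma_{\max}(\X)^2\kappa)^2(d^2\log n + d^3)\bigr)$, the $\bigo(d^3)$ setup being dominated.

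Finally, summing over the $\tmix$ outer iterations by linearity of expectation yields the stated bound $\bigo\bigl(\tmix\,(1+\sigma_{\max}(\X)^2\kappa)^2(d^2\log n + d^3)\bigr)$, on top of the $\bigo(nd^2)$ one-time preprocessing. There is no substantive obstacle — the only subtle point is that we need the expected number of \emph{trials} (given by \cref{thm-num-rejections-bound}) to convert into expected cost, which is immediate because trial costs are deterministic and the stopping time depends only on the i.i.d.\ acceptance coins, so Wald's identity applies cleanly. All remaining steps are straightforward arithmetic on the cost tally.
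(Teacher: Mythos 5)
Your proposal is correct and follows essentially the same route as the paper's own proof: per iteration, the expected number of rejection-sampling trials is bounded by \cref{thm-num-rejections-bound}, each trial costs $\bigo(d^2\log n + d^3)$ by \cref{thm-runtime-tree-based-kdpp} (with $k=2$), and multiplying by $\tmix$ plus the $\bigo(nd^2)$ preprocessing gives the bound. Your accounting is merely more explicit (Wald's identity, the $\bigo(d^3)$ setup for $\W^A$ and $\widehat{\W}^A$ being dominated), but there is no substantive difference.
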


\begin{figure*}[t]
\begin{minipage}[t]{0.49\textwidth}
    \begin{figure}[H]
        \centering
        \hspace{-0.15in}
     	\subfigure[$k$-NDPP]{\includegraphics[width=0.5\textwidth]{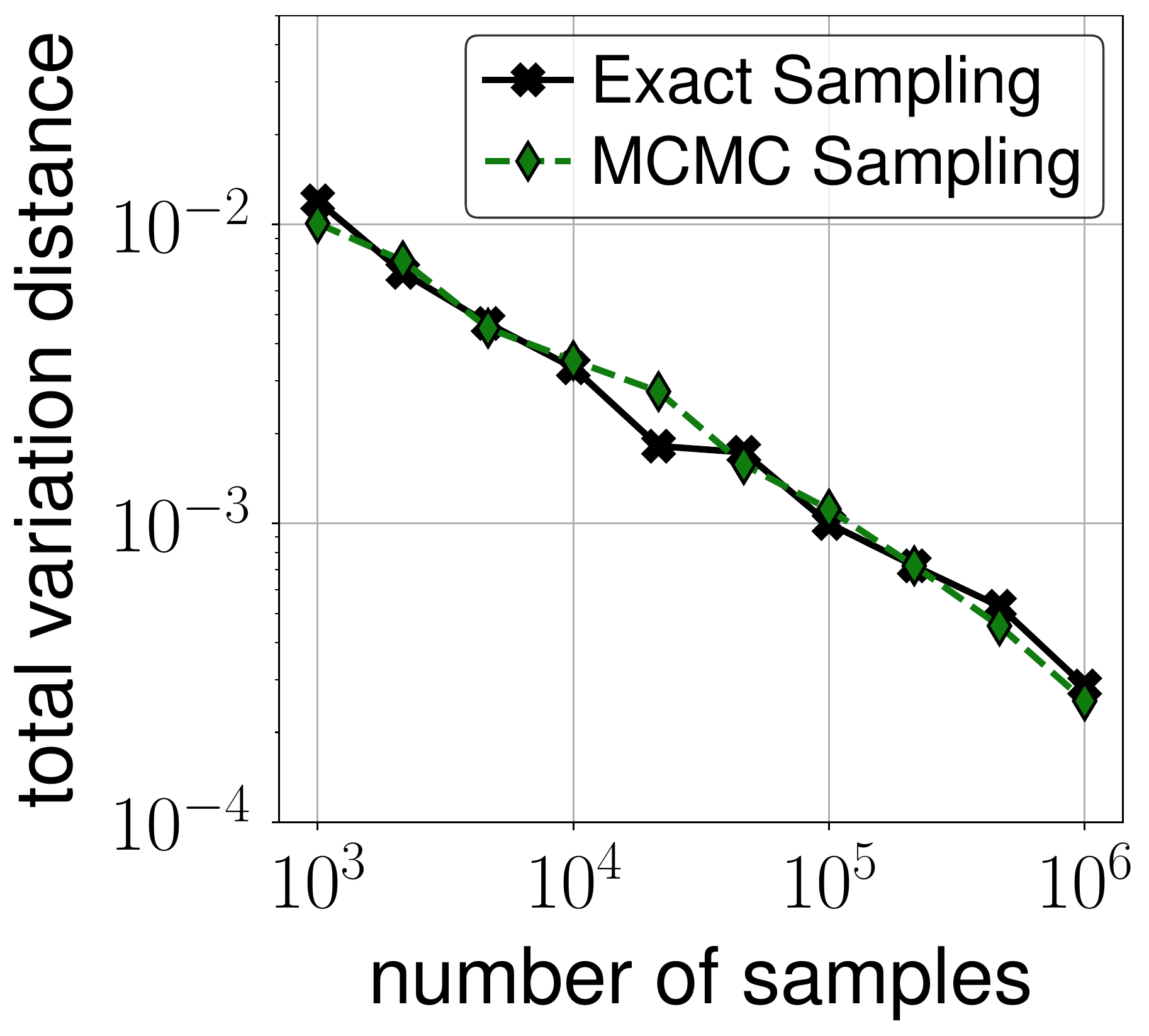}\label{fig:syn-small-sample-quality-kndpp}}
     	\hspace{-0.05in}
     	\subfigure[NDPP]{\includegraphics[width=0.5\textwidth]{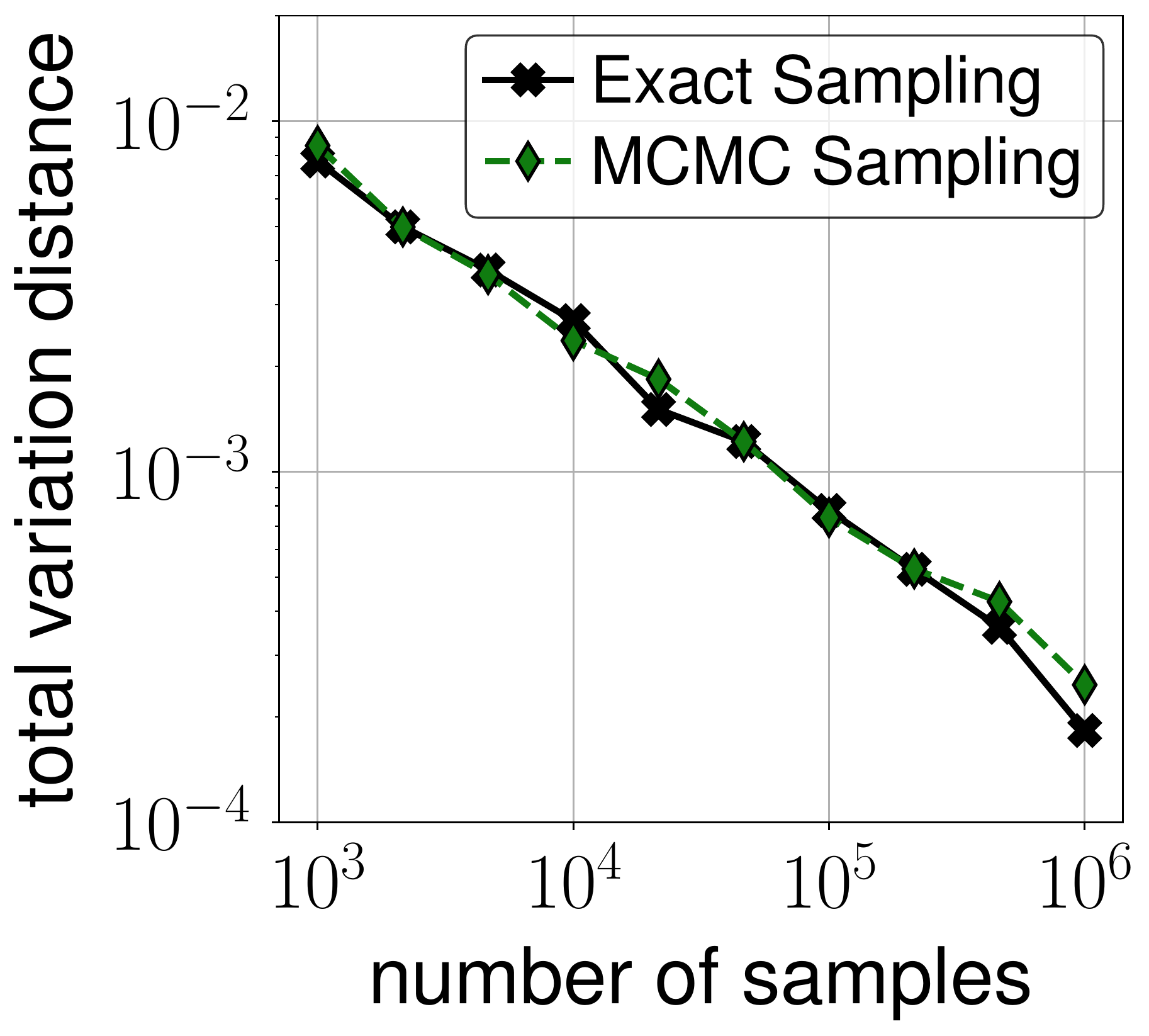}\label{fig:syn-small-sample-quality-ndpp}}
        \vspace{-0.14in}
        \caption{Total variation distance between the exact sampler and our proposed MCMC sampler for (a) a $k$-NDPP and (b) a unconstrained-size NDPP. We use synthetically-generated NDPP kernels with $n=10, d=8, k=5$, and set $\tmix=25$ for our MCMC algorithm.} \label{fig:syn-small-sample-quality}
    \end{figure}
\end{minipage}
\hfill
\begin{minipage}[t]{0.49\textwidth}
    \begin{figure}[H]
        \centering
        \hspace{-0.15in}
     	\subfigure[]{\includegraphics[width=0.5\textwidth]{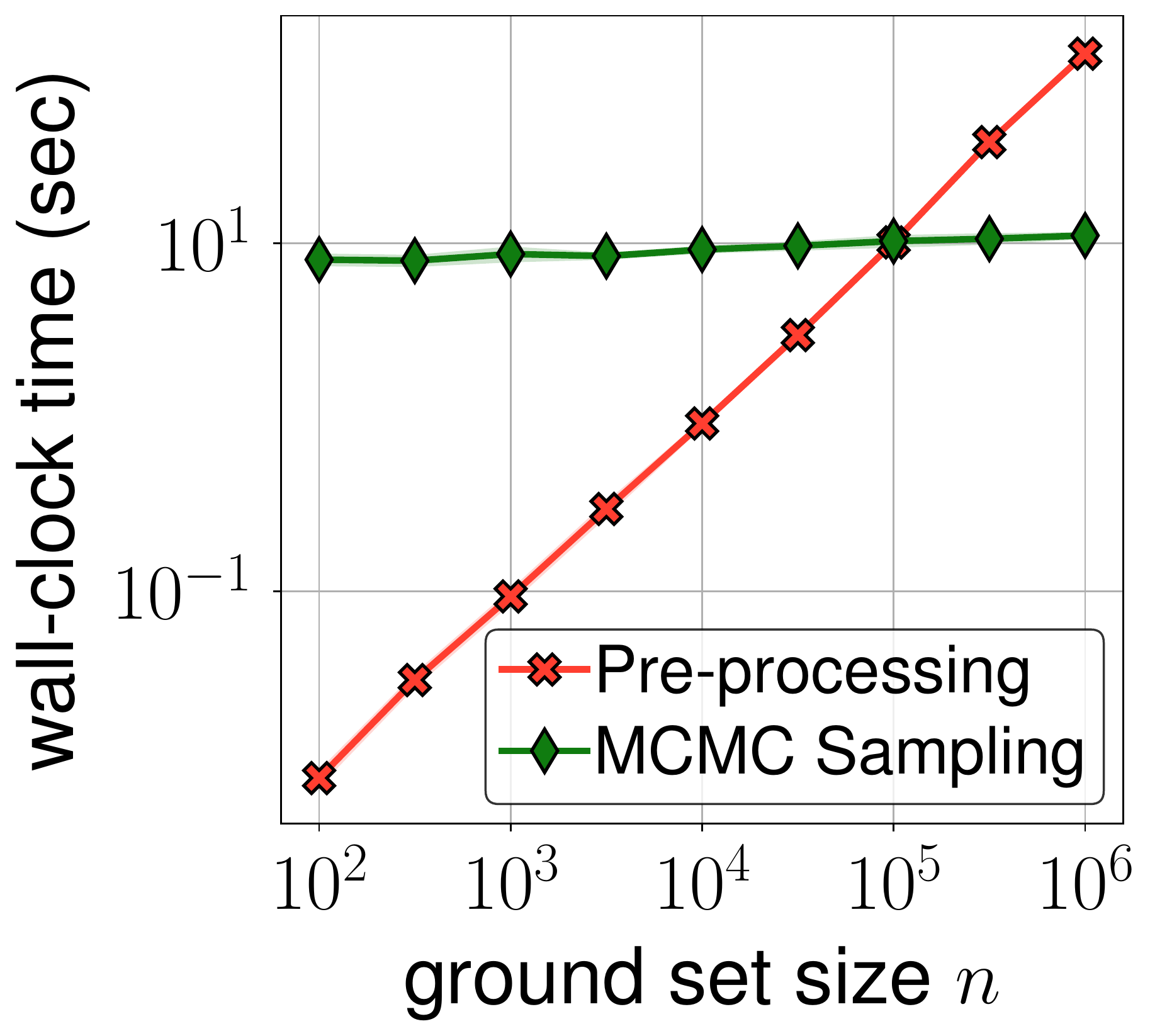}\label{fig:syn-small-runtime-kndpp-n}}
     	\hspace{-0.05in}
     	\subfigure[]{\includegraphics[width=0.5\textwidth]{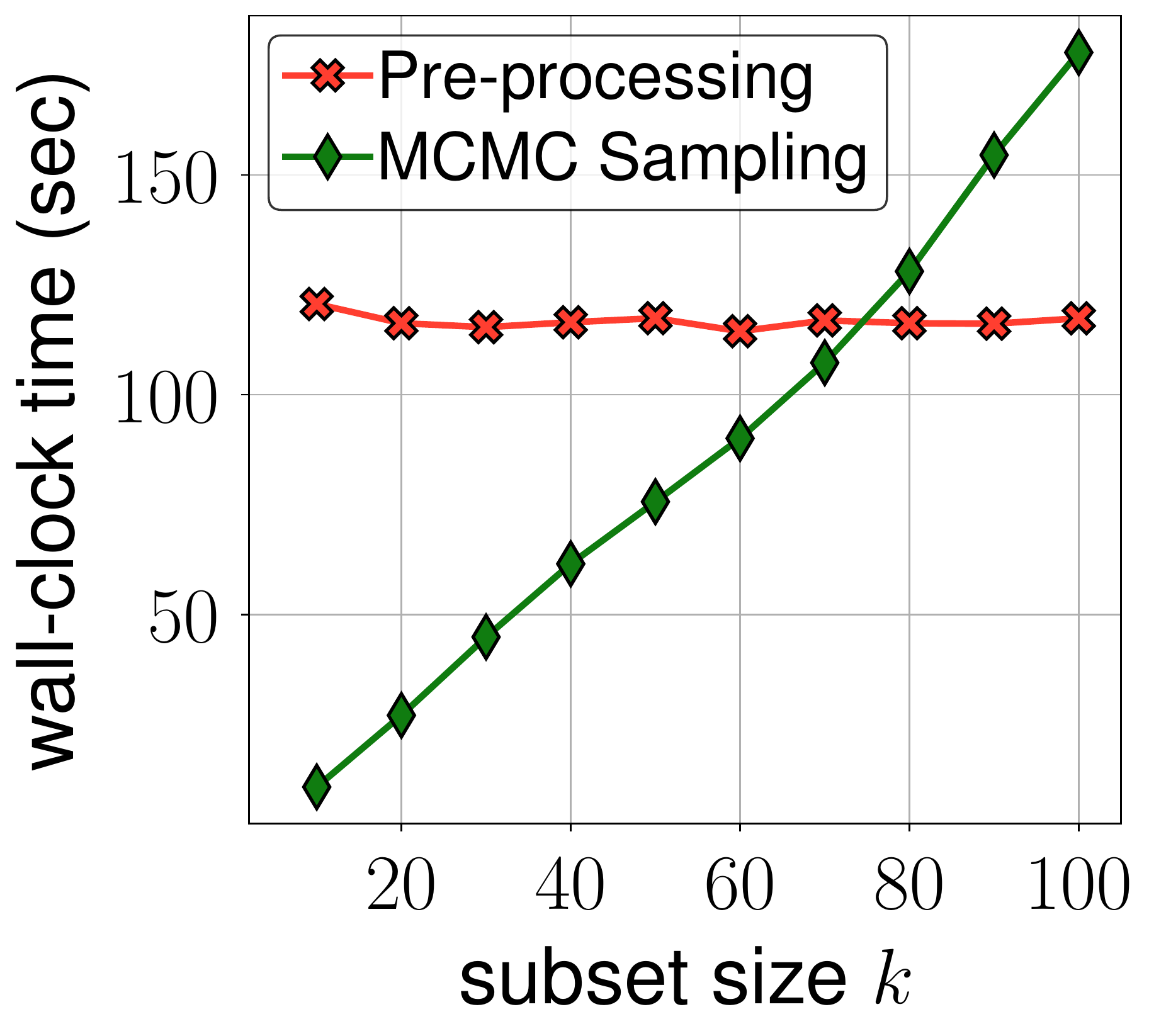}\label{fig:syn-small-runtime-kndpp-k}}
        \vspace{-0.14in}
        \caption{Wall-clock runtime for the preprocessing and sampling steps of our scalable MCMC algorithm, for $k$-NDPPs with synthetic kernels. In (a) we vary $n\in\{10^2,\dots,10^6\}$ and set $k=10$, and in (b) vary $k\in\{10,\dots,100\}$, and set $n=10^6$.} \label{fig:syn-small-runtime-kndpp}
    \end{figure}
\end{minipage}
% \vspace{-0.1in}
\end{figure*}

The proof of \cref{thm-total-runtime} can be found in \cref{sec-proof-thm-total-runtime}. Note that the size $k$ only affects the number of MCMC iterations $\tmix$, since each transition of the MCMC algorithm requires sampling from a $2$-NDPP. Moreover, as mentioned in \cref{sec-kndpp-mcmc-sampling-background}, $\tmix = \bigo(k^2 \log\frac{1}{\varepsilon\Pr( S_0)})$ guarantees convergence. Therefore, our MCMC algorithm runs in time that is sublinear in $n$ and polynomial in both $k$ and $d$. 
In \cref{sec-exp-mcmc-convergence}, we compare the MCMC sampling algorithm~(\cref{alg-down-up-kndpp}) to the exact sampler by empirically evaluating the total variation (TV) distance to the ground-truth distribution. We observe that the TV distance of the MCMC sampler with $\tmix=k^2$ decreases as fast as the exact sampler when the number of samples increases.

\section{Extension from $k$-NDPPs to Unconstrained NDPPs}

In this section we show that any $k$-NDPP sampling algorithm can be transformed into an unconstrained-size NDPP sampling algorithm, with a marginal cost for preprocessing.
A simple approach for using a $k$-NDPP sampler to perform NDPP sampling consists of two steps: 1) first, sample a random variable $k\in\{0,1,\dots,d\}$ with probability proportional to the normalization constant of the $k$-NDPP, 
and then 2) run any $k$-NDPP sampling algorithm with the chosen $k$. 
From~\cref{eq-kdpp-normalization-constant}, the normalization constant of a $k$-NDPP is equal to the $k$-th elementary symmetric polynomial $e_k(\{\lambda_i\}_{i=1}^d)$, where $\{\lambda_i\}_{i=1}^d$ are the nonzero eigenvalues of the rank-$d$ kernel. Once we obtain the eigenvalues in $\bigo(nd^2)$ time, the corresponding $e_k$'s can be computed in $\bigo(dk)$ time using \cref{eq-elementary-symmetric-polynomials-recursion}.  
The MCMC sampling algorithm for NDPPs is outlined in~\cref{alg-mcmc-sampling-random-size-ndpp}.

We consider the computation of the $e_k$'s as a preprocessing step, because we re-use these values for drawing subsequent NDPP samples.  The runtime complexity of this preprocessing step is $\bigo(nd^2)$, which is equivalent to the runtime complexity of preprocessing for our sublinear-time MCMC sampling algorithm for $k$-NDPPs.

We describe the overall runtime of \cref{alg-mcmc-sampling-random-size-ndpp} in the following proposition.

\begin{algorithm}[t]
	\caption{MCMC Sampling for NDPP} \label{alg-mcmc-sampling-random-size-ndpp}
	\setstretch{1.2}
	\begin{algorithmic}[1]
		\STATE{ {\bf Input:} $\X \in \Rbb^{n \times d}, \W \in \Rbb^{d \times d}$}
		\STATE $\{(\lambda_i, \v_i)\}_{i=1}^d \gets $ Eigendecomp. of $\W \X^\top \X$
		\STATE Compute elementary symmetric polynomials $\{e_k\}_{k=0}^d$ of $\{\lambda_i\}_{i=1}^d$ ($\triangleright$ Run Algorithm 7 in~\cite{kulesza2012determinantal})
		\STATE Sample $k \in \{0,1,\dots,d\}$ with prob. $\propto e_k$
		\STATE Compute $\tmix$ with the chosen $k$ (e.g., $\tmix = k^2$)
		\STATE Construct a binary tree $\mathcal{T}$ with $\X$
		\STATE $S \gets $ Run \cref{alg-down-up-kndpp} with $\mathcal{T}, \X, \W, k, \tmix$
		\STATE {\bf Return $S$}
	\end{algorithmic}
\end{algorithm}

\begin{restatable}{proposition}{runtimeunconstraintedndpp} \label{thm-runtime-unconstrained-ndpp}
    Given $\X\in\Rbb^{n \times d}$ and $\W \in \Rbb^{d \times d}$, such that $\W + \W^\top \succeq 0$, consider $\kappa$ as in \cref{def-condition-number}.
    With a preprocessing step that runs in time $\bigo(nd^2)$, \cref{alg-mcmc-sampling-random-size-ndpp} runs in time $\bigo(\tmix~(1+\sigma_{\max}(\X)^2~\kappa)^2~(d^2 \log n + d^3))$ in expectation.
\end{restatable}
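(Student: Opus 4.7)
The plan is to decompose the cost of \cref{alg-mcmc-sampling-random-size-ndpp} into a one-time preprocessing phase (lines 1--3 and 6) and a per-sample phase (lines 4--5 and 7), bound each contribution separately, and then invoke \cref{thm-total-runtime} to finish.

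For the preprocessing, I would first account for the eigendecomposition on line 2: forming the $d \times d$ product $\W \X^\top \X$ takes $\bigo(nd^2)$ to compute $\X^\top \X$ plus $\bigo(d^3)$ for the multiplication by $\W$ and for the eigendecomposition itself, all dominated by $\bigo(nd^2)$. Next, the elementary symmetric polynomials $\{e_k\}_{k=0}^d$ on line 3 can be computed from $\{\lambda_i\}_{i=1}^d$ in $\bigo(d^2)$ time using the recursion in \cref{eq-elementary-symmetric-polynomials-recursion}. Finally, by \cref{thm-runtime-tree-based-kdpp}, building the binary tree $\mathcal{T}$ on line 6 costs $\bigo(nd^2)$. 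Summing these contributions, the preprocessing cost is $\bigo(nd^2)$, matching the claim.

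For the per-sample phase, since the probabilities $\{e_k / \sum_j e_j\}$ are already tabulated, sampling the random size $k \in \{0,1,\dots,d\}$ on line 4 takes $\bigo(d)$ time, and computing $\tmix$ on line 5 is $\bigo(1)$. Once $k$ is fixed, the call to \cref{alg-down-up-kndpp} on line 7 can reuse the precomputed $\X^\top\X$ and the tree $\mathcal{T}$, so by \cref{thm-total-runtime} its expected runtime (conditioned on the realized $k$) is $\bigo(\tmix\,(1+\sigma_{\max}(\X)^2\,\kappa)^2\,(d^2 \log n + d^3))$. Crucially, $\kappa$ in \cref{def-condition-number} is defined as a maximum over all $A \subseteq [n]$ with $|A| \leq d-2$, so the bound is uniform in $k$, and taking expectation over $k$ leaves it unchanged (with $\tmix$ understood as in line 5 of \cref{alg-mcmc-sampling-random-size-ndpp}). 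Adding this to the $\bigo(d)$ sampling cost for $k$ is dominated, yielding the stated overall expected runtime.

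The argument is essentially bookkeeping on top of previously established lemmas, so there is no substantive obstacle. The only minor care required is to verify that (i) the tree $\mathcal{T}$ constructed from $\X$ alone (independent of $k$ and of the per-iteration $\widehat{\W}^A$) is reusable for every subsequent proposal DPP sampled in \cref{alg-tree-based-kdpp}, and (ii) the bound in \cref{thm-total-runtime} is uniform in $k \leq d$ so that randomizing over $k$ is harmless. Both are immediate from the statements of \cref{thm-runtime-tree-based-kdpp} and \cref{thm-total-runtime}.
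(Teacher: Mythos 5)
Your proposal is correct and follows essentially the same argument the paper relies on (the paper gives no standalone proof of this proposition, but its surrounding discussion is exactly your bookkeeping): the eigendecomposition, elementary symmetric polynomials, and tree construction are absorbed into the $\bigo(nd^2)$ preprocessing, and the per-sample cost is delegated to \cref{thm-total-runtime}, whose bound is uniform in $k$ since $\kappa$ maximizes over all $A$ with $|A|\le d-2$.
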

Previous work on exact NDPP sampling~\cite{anonymous2022scalable} also has runtime that is sublinear in $n$. However, their algorithm has runtime that is exponential in the rank of kernel $d$ (see Theorem 2 therein). In contrast, our MCMC-based approximate sampling algorithm runs in time polynomial in $d$, because of $\tmix=\widetilde{\bigo}(d^2)$.
Such a gap makes our approximate MCMC sampler feasible to run in cases where the exact sampler does not terminate for several days in some real-world settings; see \cref{sec-exp-runtime} for details.

\renewcommand{\arraystretch}{1.2}
\setlength{\tabcolsep}{8pt}
\begin{table*}[t]
\caption{Number of rejections and runtime (in seconds), for sampling and preprocessing, for $k$-NDPP and unconstrained-size NDPP sampling algorithms. Runtimes in the top three rows report sampling times, and the bottom row shows the preprocessing times of our MCMC algorithm. Bold values indicate the fastest runtimes, and ${(*)}$ indicates the expected results for those cases where the sampling algorithm does not terminate within a feasible timeframe.} \label{tab:result-ndpp-sampling}
\vspace{0.08in}
\centering
\scalebox{0.85}{
\begin{tabular}{@{}cccccccc@{}}
	\toprule
	\multicolumn{1}{l}{Task} & Metric & Algorithm & \makecell{{\bf UK Retail} \\ $n=3{,}941$}& \makecell{{\bf Recipe} \\ $n=7{,}993$} & \makecell{{\bf Instacart} \\ $n=49{,}677$} & \makecell{{\bf Million Song} \\ $n=371{,}410$} & \makecell{{\bf Book} \\ $n=1{,}059{,}437$} \\
	\midrule
	\multirow{4}{*}{$k=10$} & \multicolumn{1}{c}{\multirow{2}{*}{Runtime}} & \multicolumn{1}{c}{Exact (Rejection)}  & 406 & {\bf 2.1} & 93.7 & {\bf 0.13} & {\bf 0.46}   \\
    & \multicolumn{1}{c}{} & \multicolumn{1}{c}{MCMC (Ours)}  & {\bf 25.4} & 14.5 & {\bf 21.0} & 9.5 & 23.7 \\
	\cmidrule(l){2-8}
	& \multirow{2}{*}{\# of Rejections} & Exact (Rejection) & 20880 & 79.2 & 3102 & 2.2 & 8.5  \\
	&& MCMC (Ours) & 7.8 & 3.5 & 6.0 & 0.8 & 6.8 \\
	\midrule
    \multirow{4}{*}{$k=50$} & \multicolumn{1}{c}{\multirow{2}{*}{Runtime}} & \multicolumn{1}{c}{Exact (Rejection)}  & ${}^{(*)}$5.11$\times$10$^{12}$ & ${}^{(*)}$9.55$\times$10$^{5}$  & ${}^{(*)}$9.50$\times$10$^{5}$ & ${}^{(*)}$1.45$\times$10$^{12}$ & ${}^{(*)}$4.06$\times$10$^6$  \\
    & \multicolumn{1}{c}{} & \multicolumn{1}{c}{MCMC (Ours)} & {\bf 334} & {\bf 229} & {\bf 242} & {\bf 488} & {\bf 374} \\
	\cmidrule(l){2-8}
	& \multirow{2}{*}{\# of Rejections} & Exact (Rejection) & ${}^{(*)}$2.83$\times$10$^{13}$ & ${}^{(*)}$4.94$\times$10$^6$ & ${}^{(*)}$4.63$\times$10$^6$ & ${}^{(*)}$4.66 $\times$10$^{12}$ & ${}^{(*)}$1.65$\times$10$^{7}$ \\
	&& MCMC (Ours) & 3.8 & 1.3 & 1.6 & 5.4 & 3.2\\
	\midrule
	\multirow{5}{*}{Unconstrained} & \multicolumn{1}{c}{\multirow{3}{*}{Runtime}} & \multicolumn{1}{c}{Exact (Cholesky)}  & {\bf 5.6} & 11.5 & 71.1 & 537 & 1540 \\
	& \multicolumn{1}{c}{} & \multicolumn{1}{c}{Exact (Rejection)} & ${}^{(*)}$1.34$\times$10$^{8}$ & {\bf 1.0} & 1351.6 & ${}^{(*)}$1.89$\times$10$^{10}$ & 1022 \\
    & \multicolumn{1}{c}{} & \multicolumn{1}{c}{MCMC (Ours)}  & 75.3 & 11.8 & {\bf 21} & {\bf 281} & {\bf 80} \\
	\cmidrule(l){2-8}
	& \multirow{2}{*}{\# of Rejections} & Exact (Rejection) & ${}^{(*)}$1.50$\times$10$^9$ & 45.3 & 27941.7 & ${}^{(*)}$6.91$\times$10$^{10}$ & 13924.5 \\
	&& MCMC (Ours) & 6.0 & 3.6 & 5.7 & 7.2 & 9.8\\
        \midrule
	Preprocessing & Runtime & MCMC (Ours) & 1.0 & 2.2 & 14.0 & 30.8 & 74.3 \\
	\bottomrule
\end{tabular}
}
\end{table*}

\section{Experiments}

In this section, we report empirical results for our experiments involving several NDPP sampling algorithms, for NDPPs with and without size constraints. 

\subsection{Convergence of MCMC Sampling} \label{sec-exp-mcmc-convergence}

We first benchmark our MCMC sampling algorithm and compare it to the exact sampler for both $k$-NDPPs and unconstrained-size NDPPs. 
We randomly generate $\V,\B \in \Rbb^{n \times d/2}$, where each entry is sampled from $\mathcal{N}(0,\sqrt{2/d})$; $\D\in\Rbb^{d/2\times d/2}$, where each entry is sampled from $\mathcal{N}(0,1)$; and then construct the NDPP kernel as $\L=\V\V^\top + \B(\D-\D^\top)\B^\top$. 
We collect samples from each sampling algorithm and evaluate the empirical total variation (TV) distance, i.e., $\max_{S} \abs{p(S) - q(S)}$, where $p$ and $q$ correspond to the ground-truth and empirical distributions from the samplers, respectively.
We set $n=10, d=8, k=5$, and draw up to $10^6$ random samples from each sampler.  For our MCMC algorithm, we set $\tmix = k^2$. The results are shown in \cref{fig:syn-small-sample-quality}. We observe that the TV distance of MCMC sampling decays as fast as that of the exact sampler for both $k$-NDPPs and NDPPs. This indicates that setting the number of MCMC iterations to $k^2$ is sufficient for convergence to the target distribution. Therefore, we use $\tmix=k^2$ for all of our experiments.
In \cref{sec-exp-psrf}, we additionally validate our choice for $\tmix$ by evaluating the Potential Scale Reduction Factor (PSRF), commonly used to measure the convergence of the Markov chains~\cite{gelman1992inference}.

\subsection{Runtimes for Synthetic Datasets} 

Next, we report the runtimes of both the preprocessing and sampling steps of our proposed MCMC algorithm.
We generate random NDPP kernels using the same approach described above, and measure the actual runtime in seconds. 
In \cref{fig:syn-small-runtime-kndpp-n}, we vary the size of ground set $n$ from $10^2$ to $10^6$ while fixing $d=100, k=10$. In \cref{fig:syn-small-runtime-kndpp-k}, we vary $k$ from $10$ to $100$ while $n=10^6,d=100$ are fixed. As discussed in~\cref{thm-total-runtime}, we verify that the preprocessing time increases linearly with respect to $n$, and that the sampling time tends to grow sublinearly in $n$. Interestingly, we notice that the sampling times for both $n=10^2$ and $10^6$ are almost identical, at about 10 seconds. This indicates that our algorithm scales well with respect to $n$, and is suitable for large-scale settings.  We also see that our sampling algorithm scales superlinearly with $k$, because the number of MCMC iterations is set to $\tmix=k^2$. 

\subsection{Runtimes for Recommendation Datasets} \label{sec-exp-runtime}

To investigate the practical performance of our proposed sampling algorithms, we apply them to NDPP kernels learned from five real-world recommendation datasets, used in~\cite{anonymous2022scalable}. 
The ground set size $n$ varies from 3{,}941 to 1 million, while the rank of the kernel is generally set to $d=200$ for all datasets.
More details on these datasets can be found in \cref{sec-full-experimental-datasets,sec-full-experimental-setup}. 
We learn the low-rank components of the NDPP kernels, $\V,\B,\D$, using gradient-based maximum likelihood estimation, as described in~\cite{gartrell2020scalable}.\footnote{We use the code from \url{https://github.com/insuhan/nonsymmetric-dpp-sampling} for data preprocessing and NDPP kernel learning.}
We run our algorithms for $k$-NDPPs with sizes $k=10$ and $50$, and unconstrained-size NDPPs, and compare our MCMC algorithms to the exact rejection-based sampling algorithm~\cite{anonymous2022scalable}.
We omit the na\"ive MCMC algorithm~\cite{alimohammadi2021fractionally}, which runs in quadratic time in $n$, from our experiments, because it is over $1{,}000$ times slower than our sampling method for synthetic NDPP kernels with $n=1{,}000$.
For NDPP sampling, we also test the Cholesky-based sampling algorithm~\cite{poulson2019high}, which has linear runtime in $n$. In \cref{tab:result-ndpp-sampling}, we report the runtimes of each sampling algorithm, as well as the number of rejections if the algorithm is based on rejection sampling.

We observe that for the $10$-NDPP, the exact sampling algorithm often runs faster than our MCMC method, e.g., for the Recipe, Million Song, and Book datasets. 
However, for the $50$-NDPP, the exact sampling algorithm results in a very large number of rejections on average, and thus is infeasible for all datasets. 
On the other hand, our MCMC sampler always terminates within a few minutes, running orders of magnitude faster than the exact sampling algorithm.
For NDPP sampling, 
our algorithm is also orders of magnitude faster for the UK Retail and Million Song datasets. 
In \cref{sec:exp-ondpp}, we also apply those sampling algorithms to NDPP kernels learned with an orthogonality constraint (known as ONDPPs), which is tailored to ensure a small number of rejections for NDPP sampling~\cite{anonymous2022scalable}.
These results show that reducing the runtime complexity from exponential to polynomial time can be very important in practice.
Additionally, for NDPP sampling, we see up to a 13 times speedup for our method compared to the linear-time Cholesky-based sampling algorithm. 

\section{Conclusion}

We have shown in this work how to accelerate MCMC sampling for $k$-NDPPs by leveraging a tree-based rejection sampling algorithm. Our proposed sampling algorithm achieves runtime that is sublinear in $n$, and polynomial in $d$ and $k$. We have also extended our scalable $k$-NDPP MCMC sampling approach to NDPP sampling, while preserving the same efficient runtime. Compared to the fastest state-of-the-art exact sampling algorithms for $k$-NDPPs and NDPPs, which have runtime that is quadratic in $n$ or exponential in $d$, respectively, our method makes sampling feasible for large-scale real-world settings by showing significantly faster and more scalable runtimes.

\section*{Acknowledgements}
Insu Han was supported by TATA DATA Analysis
(grant no. 105676). Amin Karbasi acknowledges funding in direct support of this work from NSF (IIS-1845032), ONR (N00014-19-1-2406),  and the AI Institute for Learning-Enabled Optimization at Scale (TILOS).

% \clearpage

% \bibliographystyle{plainnat}
\bibliographystyle{icml2022}
% \setcitestyle{numbers}
\bibliography{bibliography}

\begin{thebibliography}{30}
\providecommand{\natexlab}[1]{#1}
\providecommand{\url}[1]{\texttt{#1}}
\expandafter\ifx\csname urlstyle\endcsname\relax
  \providecommand{\doi}[1]{doi: #1}\else
  \providecommand{\doi}{doi: \begingroup \urlstyle{rm}\Url}\fi

\bibitem[Alimohammadi et~al.(2021)Alimohammadi, Anari, Shiragur, and
  Vuong]{alimohammadi2021fractionally}
Alimohammadi, Y., Anari, N., Shiragur, K., and Vuong, T.-D.
\newblock {\href{https://arxiv.org/pdf/2102.02708.pdf}{Fractionally log-concave
  and sector-stable polynomials: counting planar matchings and more}}.
\newblock In \emph{Symposium on the Theory of Computing (STOC)}, 2021.

\bibitem[Anari \& Vuong(2021)Anari and Vuong]{anari2021sampling}
Anari, N. and Vuong, T.-D.
\newblock {\href{https://arxiv.org/pdf/2102.05347.pdf}{From Sampling to
  Optimization on Discrete Domains with Applications to Determinant
  Maximization}}.
\newblock \emph{arXiv preprint arXiv:2102.05347}, 2021.

\bibitem[Anari et~al.(2016)Anari, Gharan, and Rezaei]{anari2016colt}
Anari, N., Gharan, S.~O., and Rezaei, A.
\newblock {\href{http://proceedings.mlr.press/v49/anari16.pdf}{Monte Carlo
  Markov Chain Algorithms for Sampling Strongly Rayleigh Distributions and
  Determinantal Point Processes}}.
\newblock In \emph{Conference on Learning Theory (COLT)}, 2016.

\bibitem[Celis et~al.(2017)Celis, Deshpande, Kathuria, Straszak, and
  Vishnoi]{celis2017complexity}
Celis, L.~E., Deshpande, A., Kathuria, T., Straszak, D., and Vishnoi, N.~K.
\newblock {\href{https://arxiv.org/pdf/1608.00554.pdf}{On the Complexity of
  Constrained Determinantal Point Processes}}.
\newblock In \emph{APPROX/RANDOM}, 2017.

\bibitem[Chen et~al.(2012)Chen, Sain, and Guo]{chen2012data}
Chen, D., Sain, S.~L., and Guo, K.
\newblock
  {\href{https://link.springer.com/content/pdf/10.1057/dbm.2012.17.pdf}{Data
  mining for the online retail industry: A case study of RFM model-based
  customer segmentation using data mining}}.
\newblock \emph{Journal of Database Marketing \& Customer Strategy Management},
  2012.

\bibitem[Derezinski(2019)]{derezinski2019fastdpp}
Derezinski, M.
\newblock
  {\href{http://proceedings.mlr.press/v99/derezinski19a/derezinski19a.pdf}{Fast
  determinantal point processes via distortion-free intermediate sampling}}.
\newblock In \emph{Conference on Learning Theory (COLT)}, 2019.

\bibitem[Dupuy \& Bach(2018)Dupuy and Bach]{dupuy16}
Dupuy, C. and Bach, F.
\newblock
  {\href{http://proceedings.mlr.press/v84/dupuy18a/dupuy18a.pdf}{Learning
  determinantal point processes in sublinear time}}.
\newblock In \emph{Conference on Artificial Intelligence and Statistics
  (AISTATS)}, 2018.

\bibitem[Gartrell et~al.(2019)Gartrell, Brunel, Dohmatob, and
  Krichene]{gartrell2019nonsym}
Gartrell, M., Brunel, V.-E., Dohmatob, E., and Krichene, S.
\newblock
  {\href{https://papers.nips.cc/paper/8897-learning-nonsymmetric-determinantal-point-processes.pdf}{Learning
  Nonsymmetric Determinantal Point Processes}}.
\newblock In \emph{Neural Information Processing Systems (NeurIPS)}, 2019.

\bibitem[Gartrell et~al.(2021)Gartrell, Han, Dohmatob, Gillenwater, and
  Brunel]{gartrell2020scalable}
Gartrell, M., Han, I., Dohmatob, E., Gillenwater, J., and Brunel, V.-E.
\newblock {\href{https://arxiv.org/pdf/2006.09862.pdf}{Scalable Learning and
  MAP Inference for Nonsymmetric Determinantal Point Processes}}.
\newblock In \emph{International Conference on Learning Representations
  (ICLR)}, 2021.

\bibitem[Gelman \& Rubin(1992)Gelman and Rubin]{gelman1992inference}
Gelman, A. and Rubin, D.~B.
\newblock {\href{https://www.jstor.org/stable/2246093?seq=1}{Inference from
  iterative simulation using multiple sequences}}.
\newblock \emph{Statistical science}, 1992.

\bibitem[Gillenwater et~al.(2019)Gillenwater, Kulesza, Mariet, and
  Vassilvtiskii]{gillenwater2019tree}
Gillenwater, J., Kulesza, A., Mariet, Z., and Vassilvtiskii, S.
\newblock {\href{http://jgillenw.com/icml2019.pdf}{A Tree-Based Method for Fast
  Repeated Sampling of Determinantal Point Processes}}.
\newblock In \emph{International Conference on Machine Learning (ICML)}, 2019.

\bibitem[Han \& Gillenwater(2020)Han and Gillenwater]{han2020aistats}
Han, I. and Gillenwater, J.
\newblock {\href{http://jgillenw.com/aistats2020.pdf}{MAP Inference for
  Customized Determinantal Point Processes via Maximum Inner Product Search}}.
\newblock In \emph{Conference on Artificial Intelligence and Statistics
  (AISTATS)}, 2020.

\bibitem[Han et~al.(2022)Han, Gartrell, Gillenwater, Dohmatob, and
  Karbasi]{anonymous2022scalable}
Han, I., Gartrell, M., Gillenwater, J., Dohmatob, E., and Karbasi, A.
\newblock {\href{https://arxiv.org/pdf/2201.08417.pdf}{Scalable Sampling for
  Nonsymmetric Determinantal Point Processes}}.
\newblock In \emph{International Conference on Learning Representations
  (ICLR)}, 2022.

\bibitem[Instacart(2017)]{instacart2017dataset}
Instacart.
\newblock {\href{https://www.instacart.com/datasets/grocery-shopping-2017}{The
  Instacart Online Grocery Shopping Dataset}}, 2017.
\newblock URL \url{https://www.instacart.com/datasets/grocery-shopping-2017}.
\newblock Accessed May 2020.

\bibitem[Kathuria et~al.(2016)Kathuria, Deshpande, and
  Kohli]{kathuria2016batched}
Kathuria, T., Deshpande, A., and Kohli, P.
\newblock
  {\href{https://proceedings.neurips.cc/paper/2016/file/a1d7311f2a312426d710e1c617fcbc8c-Paper.pdf}{Batched
  gaussian process bandit optimization via determinantal point processes}}.
\newblock \emph{Neural Information Processing Systems (NIPS)}, 2016.

\bibitem[Kingma \& Ba(2015)Kingma and Ba]{kingma2015adam}
Kingma, D.~P. and Ba, J.
\newblock {\href{https://arxiv.org/pdf/1412.6980.pdf}{Adam: A method for
  stochastic optimization}}.
\newblock In \emph{International Conference on Learning Representations
  (ICLR)}, 2015.

\bibitem[Kulesza \& Taskar(2011)Kulesza and Taskar]{kulesza11}
Kulesza, A. and Taskar, B.
\newblock
  {\href{https://icml.cc/Conferences/2011/papers/611_icmlpaper.pdf}{k-DPPs:
  Fixed-size Determinantal Point Processes}}.
\newblock In \emph{International Conference on Machine Learning (ICML)}, 2011.

\bibitem[Kulesza \& Taskar(2012)Kulesza and Taskar]{kulesza2012determinantal}
Kulesza, A. and Taskar, B.
\newblock {\href{https://arxiv.org/pdf/1207.6083.pdf}{Determinantal Point
  Processes for Machine Learning}}.
\newblock \emph{Foundations and Trends{\textregistered} in Machine Learning},
  2012.

\bibitem[Li et~al.(2016)Li, Jegelka, and Sra]{li2016fast}
Li, C., Jegelka, S., and Sra, S.
\newblock {\href{http://proceedings.mlr.press/v48/lih16.pdf}{Fast DPP Sampling
  for Nystrom with Application to Kernel Methods}}.
\newblock In \emph{International Conference on Machine Learning (ICML)}, 2016.

\bibitem[Majumder et~al.(2019)Majumder, Li, Ni, and
  McAuley]{majumder2019generating}
Majumder, B.~P., Li, S., Ni, J., and McAuley, J.~J.
\newblock {\href{https://arxiv.org/pdf/1909.00105.pdf}{Generating Personalized
  Recipes from Historical User Preferences}}.
\newblock In \emph{Empirical Methods in Natural Language Processing (EMNLP)},
  2019.

\bibitem[McFee et~al.(2012)McFee, Bertin-Mahieux, Ellis, and
  Lanckriet]{mcfee2012million}
McFee, B., Bertin-Mahieux, T., Ellis, D.~P., and Lanckriet, G.~R.
\newblock {\href{https://dl.acm.org/doi/pdf/10.1145/2187980.2188222}{The
  million song dataset challenge}}.
\newblock In \emph{International Conference on the World Wide Web (WWW)}, 2012.

\bibitem[Pang et~al.(2019)Pang, Xu, Du, Chen, and Zhu]{pang2019improving}
Pang, T., Xu, K., Du, C., Chen, N., and Zhu, J.
\newblock
  {\href{http://proceedings.mlr.press/v97/pang19a/pang19a.pdf}{Improving
  adversarial robustness via promoting ensemble diversity}}.
\newblock In \emph{International Conference on Machine Learning (ICML)}. PMLR,
  2019.

\bibitem[Poulson(2020)]{poulson2019high}
Poulson, J.
\newblock {\href{https://arxiv.org/pdf/1905.00165.pdf}{High-performance
  sampling of generic Determinantal Point Processes}}.
\newblock \emph{Philosophical Transactions of the Royal Society A}, 2020.

\bibitem[Rezaei \& Gharan(2019)Rezaei and Gharan]{rezaei2019polynomial}
Rezaei, A. and Gharan, S.~O.
\newblock {\href{http://proceedings.mlr.press/v97/rezaei19a/rezaei19a.pdf}{A
  polynomial time MCMC method for sampling from continuous determinantal point
  processes}}.
\newblock In \emph{International Conference on Machine Learning (ICML)}, 2019.

\bibitem[Sharghi et~al.(2018)Sharghi, Borji, Li, Yang, and
  Gong]{sharghi2018improving}
Sharghi, A., Borji, A., Li, C., Yang, T., and Gong, B.
\newblock
  {\href{http://openaccess.thecvf.com/content_ECCV_2018/papers/Aidean_Sharghi_Improving_Sequential_Determinantal_ECCV_2018_paper.pdf}{Improving
  Sequential Determinantal Point Processes for Supervised Video
  Summarization}}.
\newblock In \emph{Proceedings of the European Conference on Computer Vision
  (ECCV)}, 2018.

\bibitem[Tremblay et~al.(2019)Tremblay, Barthelm{\'e}, and
  Amblard]{tremblay2019determinantal}
Tremblay, N., Barthelm{\'e}, S., and Amblard, P.-O.
\newblock
  {\href{https://www.jmlr.org/papers/volume20/18-167/18-167.pdf}{Determinantal
  Point Processes for Coresets}}.
\newblock \emph{Journal of Machine Learning Research (JMLR)}, 2019.

\bibitem[Wan \& McAuley(2018)Wan and McAuley]{wan2018item}
Wan, M. and McAuley, J.
\newblock
  {\href{https://dl.acm.org/doi/pdf/10.1145/3240323.3240369?casa_token=z9zyWojCDLQAAAAA:DksLQSov8yHhrMHHqoh03G63Lb2spjkRffZFNUNvOmdwNOY8VQB4fqiZOUL9eOvDDHO6fcHyqw}{Item
  recommendation on monotonic behavior chains}}.
\newblock In \emph{Conference on Recommender Systems (RecSys)}, 2018.

\bibitem[Yang et~al.(2020)Yang, Wen, Wang, Chen, Shao, Mguni, and
  Zhang]{yang2020multi}
Yang, Y., Wen, Y., Wang, J., Chen, L., Shao, K., Mguni, D., and Zhang, W.
\newblock {\href{https://arxiv.org/pdf/2006.01482.pdf}{Multi-agent
  determinantal q-learning}}.
\newblock In \emph{International Conference on Machine Learning (ICML)}, 2020.

\bibitem[Youla(1961)]{youla1961normal}
Youla, D.
\newblock
  {\href{https://www.cambridge.org/core/services/aop-cambridge-core/content/view/964D0AA8DAC0CDB9079F04331B61859D/S0008414X00013584a.pdf/normal_form_for_a_matrix_under_the_unitary_congruence_group.pdf}{A
  normal form for a matrix under the unitary congruence group}}.
\newblock \emph{Canadian Journal of Mathematics}, 1961.

\bibitem[Zhang et~al.(2017)Zhang, Kjellstr{\"o}m, and
  Mandt]{zhang2017determinantal}
Zhang, C., Kjellstr{\"o}m, H., and Mandt, S.
\newblock
  {\href{http://auai.org/uai2017/proceedings/papers/69.pdf}{Determinantal Point
  Processes for Mini-Batch Diversification}}.
\newblock In \emph{Conference on Uncertainty in Artificial Intelligence (UAI)},
  2017.

\end{thebibliography}

\clearpage
\onecolumn
\appendix

\section{Additional Details on Experimental Results}

\subsection{Efficient Tree Construction}

Although our MCMC sampler can be very fast for large-scale settings, we do note that consideration of the preprocessing cost is important. Notably, preprocessing requires construction of a binary tree with $\bigo(nd^2)$ memory space, which can be problematic in practice. To alleviate this, we suggest a \emph{fat-leaf tree structure}, where each leaf node contains $B > 1$ elements. 
This reduces the number of nodes in the tree to $\bigo(\frac{n}{B})$, and thus memory space can be reduced to $\bigo(d^2 \frac{n}{B})$.
However, since tree-based sampling returns a leaf node with some probability, 
according to Line 8 in \cref{alg-tree-based-kdpp},
an additional cost for computing the probabilities required for selecting a single item is required, with runtime 
$\bigo(d^2 B)$. Therefore, with this change, the tree-based $2$-DPP sampling runtime becomes $\bigo\left( d^2 \left(\frac{\log n}{B} + B\right) + d^3\right)$.
We set $B=8$ for datasets with $n \ge 10^5$ elements, and observe that the additional runtime overhead is very marginal, while memory consumption is reduced by a factor of $8$.

\subsection{Full Details on Datasets}
\label{sec-full-experimental-datasets}

We perform experiments on the following real-world public datasets:
\begin{itemize}
    \item \textbf{UK Retail:} This dataset~\citep{chen2012data} contains baskets representing transactions from an online retail company that sells all-occasion gifts.  We omit baskets with more than 100 items, leaving us with a dataset containing $19{,}762$ baskets drawn from a catalog of $n = 3{,}941$ products. Baskets containing more than 100 items are in the long tail of the basket-size distribution.
    
    \item \textbf{Recipe:} This dataset~\citep{majumder2019generating} contains recipes and food reviews from Food.com (formerly Genius Kitchen)\footnote{See \url{https://www.kaggle.com/shuyangli94/food-com-recipes-and-user-interactions} for the license for this public dataset.}.  Each recipe (``basket'') is composed of a collection of ingredients, resulting in $178{,}265$ recipes and a catalog of $7{,}993$ ingredients.
    
    \item \textbf{Instacart:} This dataset~\citep{instacart2017dataset} contains baskets purchased by Instacart users\footnote{This public dataset is available for non-commercial use; see \url{https://www.instacart.com/datasets/grocery-shopping-2017} for the license.}.  We omit baskets with more than 100 items, resulting in 3.2 million baskets and a catalog of $49{,}677$ products.
    
    \item \textbf{Million Song:} This dataset~\citep{mcfee2012million} contains playlists (``baskets'') of songs from Echo Nest users\footnote{See \url{http://millionsongdataset.com/faq/} for the license for this public dataset.}. We trim playlists with more than 100 items, leaving $968{,}674$ playlists and a catalog of $371{,}410$ songs.
    
    \item \textbf{Book:} This dataset~\citep{wan2018item} contains reviews from the Goodreads book review website, including a variety of attributes describing the items\footnote{This public dataset is available for academic use only; see \url{https://sites.google.com/eng.ucsd.edu/ucsdbookgraph/home} for the license.}.  For each user we build a subset (``basket'') containing the books reviewed by that user.  We trim subsets with more than 100 books, resulting in $430{,}563$ subsets and a catalog of $1{,}059{,}437$ books.     
\end{itemize}

\subsection{Full Details on Experimental Setup} \label{sec-full-experimental-setup}

\paragraph{NDPP kernel learning.} We use the learning algorithm described in~\cite{gartrell2020scalable}, where we learn the kernel components $\V,\B\in\Rbb^{n \times d/2},\D\in\Rbb^{d/2 \times d/2}$ by minimizing the regularized negative log-likelihood using training example subsets $\{Y_1, \dots, Y_m\}$:
\begin{align}
    \min_{\V, \B, \D}~~-&\frac{1}{m} \sum_{i=1}^m \log \det \left(\V_{Y_i} \V_{Y_i}^\top + \B_{Y_i} (\D-\D^\top) \B_{Y_i}^\top \right) \nonumber \\ 
    &+ \log \det \left(\V \V^\top + \B (\D-\D^\top) \B^\top + \I \right) + \alpha \sum_{i=1}^{n} \frac{\|\v_i\|_2^2}{\mu_i}  + \beta \sum_{i=1}^{n} \frac{\|\b_i\|_2^2}{\mu_i},
\end{align}
where $\v_i$ and $\b_i$ are the $i$-th row vectors of $\V$ and $\B$, respectively.  We also use the training scheme from~\cite{anonymous2022scalable}, where $300$ randomly-selected baskets are held-out as a validation set for tracking convergence during
training, another $2000$ random subsets are used for testing, and the remaining baskets are used for training. Convergence is reached during training when the relative change in validation log-likelihood is below a predetermined threshold. We use the Adam optimizer~\cite{kingma2015adam}; we initialize $\D$ from $\mathcal{N}(0,1)$, and 
$\V$ and $\B$ are initialized from the $\mathcal{U}([0,1])$. We set $\alpha=\beta=0.01$ for all datasets.

\paragraph{ONDPP kernel learning.}  Unlike the NDPP kernel, the orthogonal NDPP (ONDPP) kernel~\cite{anonymous2022scalable} is parameterized as
$\L = \V\V^\top + \B (\D-\D^\top)\B^\top$, where
\begin{align*}
    \D = \mathrm{Diag}\left(
    \begin{bmatrix} 0 & \sigma_1 \\ 0 & 0 \end{bmatrix}, \dots, \begin{bmatrix} 0 & \sigma_{d/2} \\ 0 & 0 \end{bmatrix} 
    \right) \in \Rbb^{d/2 \times d/2}
\end{align*}
and $\sigma_j > 0$. The training objective is 
\begin{align}
    \min_{\V, \B, \{\sigma_j\}_{j=1}^{d/2}} -\frac{1}{m} \sum_{i=1}^m \log\left(\frac{\det(\L_{Y_i})}{\det(\L + \I)}\right) + 
    \alpha \sum_{i=1}^{n} \frac{\|\v_i\|_2^2}{\mu_i} + \beta \sum_{i=1}^{n} \frac{\|\b_i\|_2^2}{\mu_i} + \gamma \sum_{j=1}^{d/2} \log\left(1 + \frac{2\sigma_j}{\sigma_j^2 + 1}\right), 
\end{align}
with constraints $\B^\top \B=\I$ and $\V^\top \B = 0$. To satisfy the first constraint, \citet{anonymous2022scalable} applies QR decomposition on $\B$; for the second constraint, we project $\V$ to the column space of $\B$ by updating $\V \leftarrow \V - \B (\B^\top \B)^{-1}(\B^\top \V)$. We use the regularizer settings from~\citet{anonymous2022scalable}: $\alpha=\beta=0.01, \gamma=0.5$ for the UK Retail dataset, $\alpha=\beta=0.01, \gamma=0.1$ for Recipe, $\alpha=\beta=0.001, \gamma=0.001$ for Instacart, $\alpha=\beta=0.01, \gamma=0.2$ for Million Song, and $\alpha=\beta=0.01, \gamma=0.1$ for Book.

\setlength{\tabcolsep}{8pt}
\begin{table*}[t]
\vspace{-0.09in}
\caption{Number of rejections and runtime (in seconds) for $k$-NDPP and unconstrained-size NDPP sampling algorithms, for ONDPP kernels learned with regularization on the number of NDPP sampling rejections.  Bold values indicate the fastest runtimes, and ${(*)}$ indicates the expected results for those cases where the sampling algorithm does not terminate within a feasible timeframe.}  \label{tab:result-ondpp-sampling}
\vspace{0.1in}
\centering
\scalebox{0.9}{
\begin{tabular}{@{}cccccccc@{}}
	\toprule
	\multicolumn{1}{l}{Task} & Metric & Algorithm & \makecell{{\bf UK Retail} \\ $n$=$3{,}941$}& \makecell{{\bf Recipe} \\ $n$=$7{,}993$} & \makecell{{\bf Instacart} \\ $n$=$49{,}677$} & \makecell{{\bf Million Song} \\ $n$=$371{,}410$} & \makecell{{\bf Book} \\ $n$=$1{,}059{,}437$} \\
	\midrule
	\multirow{4}{*}{$k=10$} & \multicolumn{1}{c}{\multirow{2}{*}{Runtime}} & \multicolumn{1}{c}{Exact (Rejection)}  & {\bf 0.04} & {\bf 0.6} & {\bf 1.9} & {\bf 0.2} & {\bf 0.8} \\
    & \multicolumn{1}{c}{} & \multicolumn{1}{c}{MCMC (Ours)} & 6.5 & 9.0 & 11.8 & 8.3 & 10.7 \\
	\cmidrule(l){2-8}
	& \multirow{2}{*}{\# of Rejections} & Exact (Rejection) & 0 & 12.4 & 36.6 & 2.1 & 13.4 \\
	&& MCMC (Ours) & 0 & 0.9 & 1.3 & 0.3 & 0.9 \\
	\midrule
    \multirow{4}{*}{$k=50$} & \multicolumn{1}{c}{\multirow{2}{*}{Runtime}} & \multicolumn{1}{c}{Exact (Rejection)} & {\bf 0.4 } & $^{(*)}$7.33$\times$10$^{9}$ &  $^{(*)}$1.92$\times$10$^{8}$ & {\bf 99.1} &  $^{(*)}$7.95$\times$10$^6$\\
    & \multicolumn{1}{c}{} & \multicolumn{1}{c}{MCMC (Ours)} & 140.7 & {\bf 450.8} & {\bf 307.9} & 182.9 & {\bf 285.2} \\
	\cmidrule(l){2-8}
	& \multirow{2}{*}{\# of Rejections} & Exact (Rejection) & 0.1 & $^{(*)}$2.08$\times$10$^{8}$ &  $^{(*)}$4.96$\times$10$^{8}$ & 239.7 &  $^{(*)}$1.61$\times$10$^{7}$ \\
	&& MCMC (Ours) & 0.1 & 6.2 & 2.6 & 0.5 & 2.1 \\
	\midrule
	\multirow{4}{*}{Unconstrained} & \multicolumn{1}{c}{\multirow{2}{*}{Runtime}} & \multicolumn{1}{c}{Exact (Rejection)} & {\bf 0.1} & {\bf 0.7} & {\bf 6.0} & {\bf 7.5} & {\bf 2.8} \\
    & \multicolumn{1}{c}{} & \multicolumn{1}{c}{MCMC (Ours)} & 23.1 & 8.3 & 11.7 & 81.3 & 17.1 \\
	\cmidrule(l){2-8}
	& \multirow{2}{*}{\# of Rejections} & Exact (Rejection) & 0.1 & 15.0 & 91.6 & 27.5 & 34.0 \\
	&& MCMC (Ours) & 0.0 & 1.1 & 1.3 & 0.4 & 0.9 \\
	\bottomrule
\end{tabular}
}
\vspace{-0.5 cm}
\end{table*}

\subsection{Additional Experiments with ONDPPs} \label{sec:exp-ondpp}

We apply NDPP sampling algorithms in \cref{sec-exp-runtime} to NDPP kernels learned with an orthogonality constraint (known as ONDPPs), studied in~\cite{anonymous2022scalable}. In particular, these kernels are learned using a regularization mechanism that guarantees a small number of NDPP sampling rejections. Therefore, we expect exact sampling with ONDPP kernels to run very quickly.
\cref{tab:result-ondpp-sampling} shows the results with real-world datasets and ONDPP kernels learned on these datasets. 
As expected, exact ONDPP sampling runs faster than our MCMC approach for unconstrained-size NDPPs. It also runs faster for $10$-NDPP sampling.  However, we see that for $50$-NDPP the expected exact sampling runtimes are over 92 days for three datasets, while our MCMC approach always terminates within a few minutes.
This substantial slowdown for $50$-NDPP sampling results from the runtime being exponential in $k$ for exact sampling, which we are unable to mitigate using regularization during training. 
This suggests that for $50$-NDPP sampling, our scalable MCMC algorithm is the best and only viable choice in practice.

\subsection{Empirical Mixing Time with Potential Scale Reduction Factor (PSRF)} \label{sec-exp-psrf}
We additionally validate the mixing times of our MCMC sampling algorithm (\cref{alg-down-up-kndpp}) using the Potential Scale Reduction Factor (PSRF). PSRF computes the ratio of within-chain and between-chain variances and is frequently used for measuring the empirical mixing times of MCMC samplers. We used the synthetic dataset described in \cref{sec-exp-mcmc-convergence}, and the PSRF implementation in \href{https://www.tensorflow.org/probability/api_docs/python/tfp/mcmc/potential_scale_reduction}{$\mathtt{tensorflow.probability.mcmc}$}, with $100$ independent chains for $n=\{100,200,\dots,3200\}, k=\{2,3,\dots,30\}$, and a fixed $d=20$. Interestingly, as shown in \cref{fig-psrf}, we observe that empirical mixing times increase linearly in $k$ for all choices of $n$. We leave the problem of further improving the mixing time of our NDPP MCMC sampling algorithm for future work.
\begin{figure}[H]
    \centering
    \includegraphics[width=0.35\textwidth]{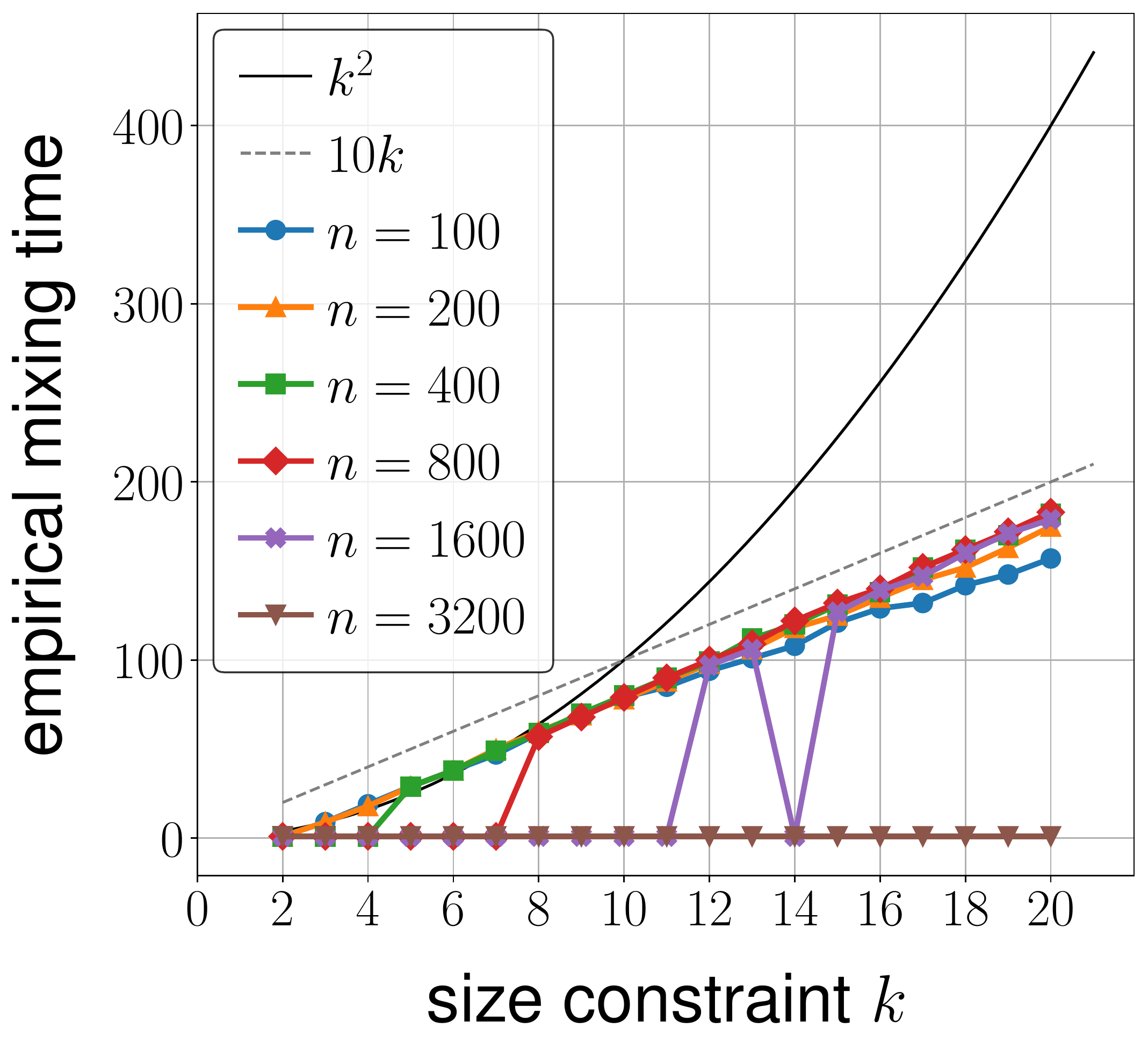}
    \vspace{-0.1in}
    \caption{Empirical mixing time, computed using Potential Scale Reduction Factor (PSRF), for our proposed NDPP MCMC sampling algorithm.} \label{fig-psrf}
\end{figure}

\section{MAP Inference for Initialization}
We observe that the mixing time in \cref{eq:mixing-time} also depends on the initial subset $S_0$. It is desirable to find a size-$k$ subset $S_0$ where $\det(\L_{S_0})$ is as large as possible, and then use $S_0$ as the initial subset in \cref{alg-down-up-kndpp}. This is known as the MAP inference problem for a DPP; that is, $$\argmax_{S \in \binom{[n]}{k}} \det(\L_S).$$
MAP inference for a NDPP is generally known to be NP-hard, and a greedy algorithm is typically used as a heuristic~\cite{gartrell2020scalable}. In particular, \citet{gartrell2020scalable} showed that with a rank-$d$ NDPP kernel, greedy MAP inference runs in $\bigo(nd^2)$ time. Once we find a proper size-$k$ subset $S_0$, we can re-use $S_0$ for drawing subsequent $k$-NDPP samples. Therefore, for a faster mixing time, we utilize greedy MAP inference as a preprocessing step, while preserving the total preprocessing runtime described previously.

Furthermore, this MAP-based initialization approach can also be used for NDPP sampling without size constraints. We note that the greedy algorithm finds elements in the output subset in a sequential way.  In other words, if $\{s_1, \dots, s_d\}$ is the output of the greedy algorithm with size constraint $d$, then the algorithm with size constraint $k\le d$ returns $\{s_1, \dots, s_k\}$. Therefore, for NDPP sampling, we run the greedy algorithm to find a sequence of $d$ items that maximize the determinant of each principal submatrix of size $d$. While running our MCMC NDPP sampler (\cref{alg-mcmc-sampling-random-size-ndpp}), if the size random variable $k$ is selected, then we run the MCMC $k$-NDPP sampling (\cref{alg-down-up-kndpp}) with the chosen $k$ and a subset containing the first $k\le d$ elements in the sequence obtained from the greedy algorithm. In practice, for our experiments in \cref{sec-exp-mcmc-convergence} we observe that our MCMC sampler without greedy initialization shows promising convergence, and thus we omit this procedure in our experiments.

\section{Proofs} \label{sec:proof}

\subsection{Proof of \cref{thm-ndpp-upper-bound}} \label{sec-proof-thm-ndpp-upper-bound}
\thmdppupperbound*

\begin{proof}
For simplicity, we write that $\G:=\X(\frac{\W^A + \W^A{}^\top}{2})\X^\top$ and $\A:=\X(\frac{\W^A - \W^A{}^\top}{2})\X^\top$, so that $\X \W^A \X^\top = \G + \A$. Also, denote $\B:=\X \widehat{\W}^A \X^\top-\G$. 
Since $\G$ is positive semi-definite, for any $S \subseteq [n]$ such that $|S|\leq d$, we have
\begin{align}
	\det([\X \W^A \X^\top]_S) &= \det\left( \G_S + \A_S \right) \nonumber \\ 
	&= \det( \G_S^{1/2} (\I + \G_S^{-1/2} \A_S \G_S^{-1/2}) \G_S^{1/2}) \nonumber \\
	&= \det( \G_S^{1/2}) \cdot \det(\I + \G_S^{-1/2} \A_S \G_S^{-1/2}) \cdot \det(\G_S^{1/2}) 
\end{align}
where $\I$ is the $|S|$-by-$|S|$ identity matrix. Similarly, 
\begin{align}
	\det([\X\widehat{\W}^A \X^\top]_S) = \det( \G_S^{1/2}) \cdot \det(\I + \G_S^{-1/2} \B_S \G_S^{-1/2}) \cdot \det(\G_S^{1/2}).
\end{align}
From Theorem 2.1 in \citep{kulesza2012determinantal}, we have
\begin{align}
	&\det(\I + \G_S^{-1/2} \A_S \G_S^{-1/2}) = \sum_{T \subseteq [|S|]}  \det( [\G_S^{-1/2} \A_S \G_S^{-1/2}]_T),\\
	&\det(\I + \G_S^{-1/2} \B_S \G_S^{-1/2}) = \sum_{T \subseteq [|S|]}  \det( [\G_S^{-1/2} \B_S \G_S^{-1/2}]_T).
\end{align}
Therefore, it is enough to prove that for every $T \subseteq [|S|]$ 
\begin{align}
	\det( [\G_S^{-1/2} \A_S \G_S^{-1/2}]_T) \leq \det( [\G_S^{-1/2} \B_S \G_S^{-1/2}]_T).
\end{align}
Now consider the Youla decomposition on $\frac{\W^A - \W^A{}^\top}{2}$ as in~\cref{eq-youla-WA}, i.e., 
\begin{align}
    \frac{\W^A - \W^A{}^\top}{2} 
    &= \sum_{i=1}^{d/2} \sigma_i \left(\y_i \z_i^\top - \z_i \y_i^\top\right)
    = \V~\mathrm{Diag}\left( \begin{bmatrix} 0 & \sigma_1 \\ -\sigma_1 & 0 \end{bmatrix},\cdots,  \begin{bmatrix} 0 & \sigma_{\frac{d}{2}}, \\ -\sigma_{\frac{d}{2}} & 0 \end{bmatrix}\right)~\V^\top,
\end{align}
where $\V := [\y_1,\z_1\dots,\y_{\frac{d}{2}}, \z_{\frac{d}{2}}]$. Then, it can be written 
\begin{align}
    &\G_S^{-1/2} \A_S \G_S^{-1/2} = \G_S^{-1/2} \X_{S,:} \V \cdot \mathrm{Diag} \left( \begin{bmatrix} 0 & \sigma_1 \\ -\sigma_1 & 0 \end{bmatrix},\cdots,  \begin{bmatrix} 0 & \sigma_{\frac{d}{2}}, \\ -\sigma_{\frac{d}{2}} & 0 \end{bmatrix}  \right) \cdot \V^\top\X_{S,:}^\top \G_S^{-1/2} :=\R, \\
    &\G_S^{-1/2} \B_S \G_S^{-1/2} = \G_S^{-1/2} \X_{S,:} \V \cdot \mathrm{Diag} \left( \sigma_1, \sigma_1, \dots, \sigma_{\frac{d}{2}}, \sigma_{\frac{d}{2}} \right) \cdot \V^\top\X_{S,:}^\top \G_S^{-1/2} := \widehat{\R}.
\end{align}
From Theorem 1 in~\cite{anonymous2022scalable}, it holds that $\det(\R_T) \leq \det(\widehat{\R}_T)$ for all $T \subseteq [|S|]$. This completes the proof of \cref{thm-ndpp-upper-bound}.
\end{proof}

\subsection{Proof of \cref{thm-runtime-tree-based-kdpp}} \label{sec-proof-thm-runtime-tree-based-kdpp}
\thmruntimetreebasedkdpp*
\begin{proof}
The preprocessing for the $k$-DPP sampler includes (1) a binary tree construction based on $\X \in \Rbb^{n \times d}$ and (2) computing $\C = \X^\top \X \in \Rbb^{d \times d}$. Both can be done in $\bigo(nd^2)$ time. Given this preprocessing, \cref{alg-tree-based-kdpp} first performs the eigendecomposition of $\U \C \U^\top$, which requires $\bigo(d^3)$ time. Then, a subset $E\subseteq [d]$ is sampled with probability proportional to $\prod_{i\in E}\lambda_i$ where the $\lambda_i$'s are the eigenvalues of $\U \C \U^\top$. With \citep[Algorithm 8]{kulesza2012determinantal}, this can be done in $\bigo(dk)$ time. 
Next, we need to perform tree-based sampling and query matrix updates for $k$ iterations. 
Since the tree has depth $\bigo(\log n)$, and computing the required probability for moving down the tree takes $\bigo(d^2)$ time, the tree-based sampler requires $\bigo(d^2 \log n)$ time. In addition, computation of the query matrix runs in $\bigo(d^2 k)$. Therefore, the overall runtime of  \cref{alg-tree-based-kdpp} (after preprocessing) is $\bigo(d^3 + kd^2 \log n + k^2 d^2)$. 
This improves the runtime of $\bigo(d^3 + k^2 d^2 \log n)$ from previous work~\cite{gillenwater2019tree}, which uses an alternative probability formulation for the tree traversal in \cref{eq-left-probability} that needs several matrix multiplications in every tree node, resulting in $\bigo(k^2 d^2 \log n)$ runtime.  In our algorithm, these matrix multiplications are computed with a query matrix, outside of the tree traversal.
\end{proof}

\subsection{Proof of \cref{thm-num-rejections-bound}} \label{sec-proof-thm-num-rejections-bound}
\thmnumrejectionsbound*

\begin{proof}
Let $p$ be the probability distribution of the target $2$-NDPP with kernel $\X \W^A \X^\top$, and $q$ be that of the proposal $2$-DPP with kernel $\X\widehat{\W}^A \X^\top$. For every $S \in \binom{[n]\setminus A}{2}$, it holds that
\begin{align*}
    p(S)&=\frac{\det([\X\W^A\X^\top]_S)}{\sum_{\{a,b\}\in \binom{[n]\setminus A}{2}} \det([\X\W^A\X^\top]_{\{a,b\}})}\\
    &\leq \frac{\det([\X\widehat{\W}^A\X^\top]_S)}{\sum_{\{a,b\}\in \binom{[n]\setminus A}{2}} \det([\X\W^A\X^\top]_{\{a,b\}})} \\
    &= \frac{\sum_{\{a,b\}\in \binom{[n]\setminus A}{2}} \det([\X\widehat{\W}^A\X^\top]_{\{a,b\}})}{\sum_{\{a,b\}\in \binom{[n]\setminus A}{2}} \det([\X\W^A\X^\top]_{\{a,b\}})} \cdot \frac{\det([\X\widehat{\W^A}\X^\top]_S)}{\sum_{\{a,b\}\in \binom{[n]\setminus A}{2}} \det([\X\widehat{\W}^A\X^\top]_{\{a,b\}})}\\
    &= \frac{\sum_{\{a,b\}\in \binom{[n]\setminus A}{2}} \det([\X\widehat{\W}^A\X^\top]_{\{a,b\}})}{\sum_{\{a,b\}\in \binom{[n]\setminus A}{2}} \det([\X\W^A\X^\top]_{\{a,b\}})} \cdot q(S),
\end{align*}
where the inequality comes from \cref{thm-ndpp-upper-bound}. This tells us that the average number of rejections is equal to
\begin{align} \label{eq:eq-avg-num-rejections}
    \frac{ \sum_{\{a,b\} \in \binom{[n]\setminus A}{2}} \det( [\X \widehat{\W}^A \X^\top]_{\{a,b\}})}{ \sum_{ \{a,b\} \in \binom{[n]\setminus A}{2}} \det( [\X {\W}^A \X^\top]_{\{a,b\}})}.
\end{align}
Instead of finding an upper bound on the above directly, we consider the following
\begin{align*}
	 \max_{ \{a,b\} \in \binom{[n]\setminus A}{2}} \frac{ \det( [\X \widehat{\W}^A \X^\top]_{\{a,b\}})}{ \det( [\X {\W}^A \X^\top]_{\{a,b\}})},
\end{align*}
which is greater than or equal to expression \eqref{eq:eq-avg-num-rejections}.

Now, for any symmetric and positive semidefinite (SPSD) matrix $\M$, we denote by $\lambda_{\max}(\M)$ and $\lambda_{\min}(\M)$ the largest and smallest nonzero eigenvalues of $\M$, respectively.
Let $\S \coloneqq \frac{\W^A + {\W^A}{}^\top}{2}$ and $\R \coloneqq \widehat{\W}^A - \S$. From the construction of  $\widehat{\W}^A$ in~\cref{eq-spec-sym}, it is easy to check that both $\S$ and $\R$ are SPSD. First we claim that for any $Y \subseteq [n] \setminus A$, it holds that
\begin{align} \label{eq-lower-bound-xwx}
	\det([\X \W^A \X^\top]_Y) \geq \det([\X \S \X^\top]_Y).
\end{align}
This comes from the following. If $\det([\X \S \X^\top]_Y)=0$, the result is trivial due to $\det([\X \W^A \X^\top]_Y) \geq 0$ for all $Y$. Assume that $\det([\X \S \X^\top]_Y)\neq 0$, then
\begin{align}
    \frac{\det([\X \W^A \X^\top]_Y)}{\det([\X \S \X^\top]_Y)} 
    &= \frac{\det([\X \S \X^\top]_Y +  [\X (\W^A-\S) \X^\top]_Y)}{\det([\X \S \X^\top]_Y)}  \\
    &= \det\left(\I_{|Y|} + \underbrace{[\X \S \X^\top]_Y^{-\frac12}~\X_{Y,:}}_{\coloneqq \X'}~(\W^A - \S)~\X_{Y,:}^\top~[\X \S \X^\top]_Y^{-\frac12} \right) \\
    &= \det \left( \I_{|Y|} + \X' \left( \W^A - \S \right) \X'^\top\right) \\
    &= \sum_{T \subseteq [|Y|]} \det \left( \left[\X' \left( \W^A - \S \right) \X'^\top\right]_T\right) \\
    &\ge \det ( \left[\X' \left( \W^A - \S \right) \X'^\top\right]_{\emptyset}) = 1,
\end{align}
where the fourth line comes from \citep[Theorem 2.1]{kulesza2012determinantal}, and the last line follows from the observation that $\W^A - \S=\frac{\W^A-\W^A{}^\top}{2}$ is a skew-symmetric matrix, so that every principal submatrix has a nonnegative determinant. Now we fix some $\{a,b\} \in \binom{[n]\setminus A}{2}$ and denote $\Q := \X_{\{a,b\},:} \in \Rbb^{2 \times d}$. Then we have
\begin{align}
	 \frac{ \det( [\X \widehat{\W}^A \X^\top]_{\{a,b\}})}{ \det( [\X {\W}^A \X^\top]_{\{a,b\}})}
	 &\le \frac{\det([\X \widehat{\W}^A \X^\top]_{\{a,b\}}}{\det([\X \S \X^\top]_{\{a,b\}})} \\
	 &= \frac{\det(\Q (\S + \R) \Q^\top)}{\det(\Q \S \Q^\top)} \\
	 &= \det\left(\I_2 + \left( \Q \S \Q^\top\right)^{-1/2} \Q \R \Q^\top  \left( \Q \S \Q^\top\right)^{-1/2} \right) \\
	  &\le \left( \frac{1}{2}\cdot\tr\left(\I_2 + \left( \Q \S \Q^\top\right)^{-1/2} \Q \R \Q^\top  \left( \Q \S \Q^\top\right)^{-1/2}\right) \right)^2\\
	  &= \left(1 + \frac{1}{2} \cdot \tr\left( \Q \R \Q^\top  \left( \Q \S \Q^\top\right)^{-1} \right)\right)^2 \\
	  &\le \left(1 + \frac{1}{2} \cdot \tr \left( \Q \R \Q^\top  \right) \cdot  \lambda_{\max}\left( \left(\Q \S \Q^\top\right)^{-1} \right) \right)^2 \\
	  &\le \left(1 + \frac{ \lambda_{\max} \left(\Q \R \Q^\top \right) }{ \lambda_{\min}\left(\Q \S \Q^\top\right)}\right)^2,  \label{eq-condition-number-bound}
\end{align}
where the first line follows from \cref{eq-lower-bound-xwx}, the fourth line is due to the fact that $\det(\M)\le (\tr(\M)/d)^d$ for a SPSD matrix $\M\in\Rbb^{d \times d}$ (thanks to the \emph{AM-GM inequality}), the fifth line comes from the cyclic property of a trace, and the sixth line is from the fact that $\tr(\M\N)\leq \tr(\M) \cdot \lambda_{\max}(\N)$ for SPSD matrices $\M,\N$. 
For an arbitrary vector $\v \in \Rbb^{2}$, we observe that
\begin{align*} 
	\v^\top \left( \Q \R \Q^\top \right) \v \leq \lambda_{\max}(\R) \cdot \v^\top \Q \Q^\top \v \leq  \lambda_{\max}(\R)  \cdot \lambda_{\max}(\Q \Q^\top) \cdot \norm{\v}_2^2.
\end{align*}
Since $\Q\Q^\top = \X_{\{a,b\},:} \X_{\{a,b\},:}^\top = [\X\X^\top]_{\{a,b\}}\in \Rbb^{2 \times 2}$ is a principal submatrix of $\X \X^\top$, by Cauchy’s interlace theorem, all eigenvalues of $\Q\Q^\top$ interlace those of $\X \X^\top$, and thus $\lambda_{\max}(\Q\Q^\top) \le \lambda_{\max}(\X\X^\top) = \sigma_{\max}(\X)^2$. Furthermore, since the matrix $\R$ is obtained from the spectral symmetrization of $\frac{\W^A - \W^A{}^\top}{2}$, their spectra are identical, i.e., $\lambda_{\max}(\R) = \frac{\sigma_{\max}(\W^A - \W^A{}^\top)}{2}$. Therefore,
\begin{align} \label{eq-condition-number-upper}
    \lambda_{\max}\left(\Q\R\Q^\top\right) \leq \frac{\sigma_{\max}(\W^A - \W^A{}^\top)}{2} \cdot \sigma_{\max}(\X)^2.
\end{align}
In addition, we have\footnote{One can similarly show that $\lambda_{\min}\left(\Q \S \Q^\top\right) \ge \lambda_{\min}\left(\S\right) \cdot \lambda_{\min}\left(\Q\Q^\top\right) $. However, the matrix $\S$ can be rank-deficient, because $\W^A$ is computed by projecting $\W$ onto some subspace with dimension $d - |A|$. Thus, this approach gives us a trivial lower bound of zero.}\begin{align} \label{eq-condition-number-lower}
    \lambda_{\min}\left(\Q \S \Q^\top\right)
    = \frac{\sigma_{\min}\left( \left[\X\left(\W^A + {\W^A}{}^\top\right)\X^\top\right]_{\{a,b\}}\right)}{2} 
    \ge \frac{\min_{Y \in \binom{[n]\setminus A}{2}}\sigma_{\min}([\X(\W^A+\W^{A}{}^\top)\X^\top]_Y)}{2}.
\end{align}
Putting \cref{eq-condition-number-upper,eq-condition-number-lower} into \cref{eq-condition-number-bound} gives 
\begin{align}
	 \frac{ \det( [\X \widehat{\W}^A \X^\top]_{\{a,b\}})}{ \det( [\X {\W}^A \X^\top]_{\{a,b\}})} \leq 
	 \left( 1 + \sigma_{\max}(\X)^2 \cdot \kappa_A\right)^2,
\end{align}
where, in \cref{def-condition-number}, $\kappa_A$ is defined as 
\begin{align*}
    \kappa_A := \frac{\sigma_{\max}(\W^A-\W^{A^\top})}{\min_{Y \in \binom{[n]\setminus A}{2}}\sigma_{\min}([\X(\W^A+\W^{A}{}^\top)\X^\top]_Y)}.    
\end{align*}
This completes the proof of \cref{thm-num-rejections-bound}.
\end{proof}

\subsection{Proof of \cref{thm-total-runtime}} \label{sec-proof-thm-total-runtime}

\textbf{Proposition 5.}~~{\it Given $\X\in\Rbb^{n \times d}$ and $\W \in \Rbb^{d \times d}$, such that $\W + \W^\top \succeq 0$ and $k\geq 2$,  consider $\kappa$ as defined in \cref{def-condition-number}. With a preprocessing step that runs in time $\bigo(nd^2)$,
\cref{alg-down-up-kndpp} runs in time $\bigo(\tmix~(1+\sigma_{\max}(\X)^2~\kappa)^2~(d^2 \log n + d^3))$ in expectation.}

\begin{proof}
We remind the reader that our MCMC sampler (\cref{alg-down-up-kndpp}) repeatedly runs tree-based rejection sampling for $\tmix$ iterations. 
From \cref{thm-num-rejections-bound},  each iteration requires  $2$-DPP sampling for at most $(1 + \sigma_{\max}(\X)^2 \kappa)^2$ times on average.  From \cref{thm-runtime-tree-based-kdpp}, sampling from the $2$-DPP can be done in time $\bigo(d^2 \log n + d^3)$. Combining all of these runtimes gives the result.
\end{proof}

\end{document}